\newcommand{\circlearound}[1]{%
  \tikz[baseline=(char.base)]{
    \node[draw, circle, inner sep=0.3pt, minimum size=0.8em] (char) {#1};
  }%
}
\newcounter{todotan}
\newcounter{todoshayan}
\title{Computational Hardness of Reinforcement Learning with Partial $q^{\pi}$-Realizability}
\author{%
  David S.~Hippocampus\thanks{Use footnote for providing further information
    about author (webpage, alternative address)---\emph{not} for acknowledging
    funding agencies.} \\
  Department of Computer Science\\
  Cranberry-Lemon University\\
  Pittsburgh, PA 15213 \\
  \texttt{hippo@cs.cranberry-lemon.edu} \\
  \And
  sss Coauthor \\
}
\author{%
  Shayan Karimi \\
  University of Alberta\\
  Edmonton, Canada \\
  \texttt{skarimi3@ualberta.ca} \\
   \And
   Xiaoqi Tan \\
   University of Alberta\\
   Edmonton, Canada \\
  \texttt{xiaoqi.tan@ualberta.ca} \\
}
\def\eqref#1{equation~\ref{#1}}
\def\Eqref#1{Equation~(\ref{#1})}
\def\1{\bm{1}}
\DeclareMathAlphabet{\mathsfit}{\encodingdefault}{\sfdefault}{m}{sl}
\SetMathAlphabet{\mathsfit}{bold}{\encodingdefault}{\sfdefault}{bx}{n}
\def\gA{{\mathcal{A}}}
\def\gC{{\mathcal{C}}}
\def\gF{{\mathcal{F}}}
\def\gM{{\mathcal{M}}}
\def\gP{{\mathcal{P}}}
\def\gS{{\mathcal{S}}}
\def\gU{{\mathcal{U}}}
\def\gX{{\mathcal{X}}}
\def\sN{{\mathbb{N}}}
\def\sR{{\mathbb{R}}}
\newtheorem{lemma}{Lemma}[section]
 \newtheorem{remark}{Remark}[section]
\newtheorem{example}{Example}[section]
 \newtheorem{definition}{Definition}[section]
 \newtheorem{proposition}{Proposition}[section]
 \newtheorem{theorem}{Theorem}[section]
\newcommand{\MaxSAT}{\textsc{Max-3SAT}\xspace}
\newcommand{\DTMaxSAT}{\textsc{$\delta$-Max-3SAT}\xspace}
\newcommand{\GAPSAT}{\textsc{($b$,$\epsilon$)-\textsc{Gap-3-Sat}}\xspace}
\newcommand{\DBTMaxSAT}{\textsc{$\delta$-Max-3SAT($b$)}\xspace}
\newcommand{\GLRL}{\textsc{gLinear-$\kappa$-RL}\xspace}
\newcommand{\TGLRL}{\textsc{gLinear-2-RL}\xspace}
\newcommand{\SLRL}{\textsc{sLinear-$\kappa$-RL}\xspace}
\newcommand{\TSLRL}{\textsc{sLinear-2-RL}\xspace}
\numberwithin{figure}{subsection}
\begin{document}

\maketitle

\begin{abstract}
This paper investigates the computational complexity of reinforcement learning within a novel linear function approximation regime, termed \emph{partial $q^{\pi}$-realizability}. In this framework, the objective is to learn an $\epsilon$-optimal policy  with respect to a predefined policy set $\Pi$, under the assumption that all value functions corresponding to policies in $\Pi$ are linearly realizable. This framework adopts assumptions that are weaker than those in the $q^{\pi}$-realizability setting yet stronger than those in the $q^*$-realizability setup. As a result, it provides a more practical model for reinforcement learning scenarios where function approximation naturally arise. We prove that learning an $\epsilon$-optimal policy in this newly defined setting is computationally hard. 
More specifically, we establish NP-hardness under a parameterized \textit{greedy policy set} (i.e., argmax) and, further, show that—unless NP = RP—an exponential lower bound (exponential in feature vector dimension) holds when the policy set contains \textit{softmax policies}, under the Randomized Exponential Time Hypothesis. Our hardness results mirror those obtained in the $q^*$-realizability settings, and suggest that computational difficulty persists even when the policy class $ \Pi $ is expanded beyond the optimal policy, reinforcing the unbreakable nature of the computational hardness result regarding partial $ q^{\pi} $-realizability under two important policy sets. To establish our negative result, our primary technical contribution is a reduction from two complexity problems, \DTMaxSAT and \DBTMaxSAT, to instances of our problem settings: \GLRL (under the greedy policy set) and \SLRL (under the softmax policy set), respectively. Our findings indicate that positive computational results are generally unattainable in the context of partial $ q^{\pi} $-realizability, in sharp contrast to the $ q^{\pi} $-realizability setting under a generative access model.
\end{abstract}

\section{Introduction}\label{Ch1:Introduction}

In reinforcement learning (RL), the term ``\textit{efficiency}" often refers to \textit{statistical efficiency}, which relates to the number of samples required for an algorithm to converge to a near-optimal solution. Another critical, though less frequently addressed perspective is \textit{computational complexity}, which focuses on the computational demands of the algorithm. To address these challenges, function approximation techniques---either \textit{linear} or \textit{nonlinear}---are commonly employed. These approximation methods enable the estimation of the value function so that resulting complexity bounds become independent of the size of the state space \citep{FuncApprox-John,Sutton1998,Bertsekas2009}. Instead, complexity depends on approximation parameters, thereby offering a scalable and efficient framework for RL in large state spaces.
 
Prominent RL applications employing \textit{nonlinear} value approximators include AlphaGo \cite{44806}, AlphaZero \cite{silver2017masteringchessshogiselfplay}, MuZero \cite{Schrittwieser_2020}, and AlphaStar \cite{mathieu2023alphastarunpluggedlargescaleoffline}. Despite their practical success, nonlinear approximators are challenging to analyze theoretically. In contrast, \textit{linear} value approximators allow for more tractable theoretical analysis. Two primary settings for linear function approximation are commonly studied. The first, known as the \textit{$ q^* $-realizability} setting \citep{weisz2021exponential,wen2016efficient}, involves approximating only the optimal value function as a linear function. This setting makes no assumptions regarding the linearity of value functions under non-optimal policies, rendering it the least restrictive assumption. The second setting, known as the \textit{$ q^{\pi} $-realizability} setting \citep{du2020good}, assumes that value functions for all policies are linearly realizable---a significantly stronger condition. Under the $q^*$- and $v^*$-realizability assumptions, 
recent works have established quasi-polynomial  ~\citep{kane2022computationalstatistical} and exponential lower bounds ~\citep{kane2023exponential} on computational complexity. Under the stronger assumption of $q^{\pi}$-realizability, it is known that computationally efficient methods can be achieved with access to a generative model~\citep{yin2022efficientlocalplanninglinear}.

\noindent\textbf{Motivating Questions.}
The aforementioned studies motivate the following intriguing question:
\begin{center}
\textit{Q1: Can we still achieve positive results (in terms of computational efficiency)\\ in the $q^{\pi}$-realizability setting with a restricted policy class $ \varPi $?}
\label{quest:partial-rel}
\end{center}
Alternatively, the above question can be posed in the reverse direction: 
\begin{center}
\textit{Q2: Can we break the hardness result (in terms of computational efficiency) for the  $ q^*$-realizability setting when considering a policy class $ \varPi  $ with $ \{\pi^*\} \subsetneq \varPi $?}
\end{center}

To illustrate why answering the questions above is interesting, let us consider the spectrum of linear function approximation in RL, characterized by two endpoints. At one end of this spectrum, we have $ q^* $-realizability, representing a relatively weak assumption. At the opposite end is $q^{\pi}$-realizability, a significantly stronger realizability condition. Addressing the two questions posed above thus provides a means of bridging the gap between these two extreme cases. Towards this end, we propose the concept of \emph{partial $q^{\pi}$-realizability}, formally introduced in Section \ref{Ch3:ProblemSetting-Proof}. Under this partial realizability concept, our problem setting assumes access to a specific class of policies $ \Pi $ within which the action-value functions for all state-action pairs are linearly realizable. 

From a practical perspective, we can also view this partial $q^{\pi}$-realizability setting through the lens of agnostic RL  with linear function approximation \cite{jia2023agnostic}. In agnostic RL, we are given a policy set $\Pi$, and the goal is to learn a policy $\pi$ that competes with the best policy in the given policy class $\Pi \subset \mathcal{A}^{\mathcal{S}}$, where $\mathcal{A}$ and $\mathcal{S}$ denote the action and state spaces, respectively. 
Analogously, the partial $q^{\pi}$-realizability setting incorporates the key concept of inaccessibility to the optimal value function representation, highlighting the relationship between the agnostic RL framework and our problem setting when linear function approximation is employed.

\noindent\textbf{Main Contributions.} \label{Ch1-Sec2:Results} 
In this paper, we  introduce \textit{partial $q^{\pi}$-realizability} that bridges the gap between midler ($q^{*}$-realizability) and stronger  ($q^{\pi}$-realizability) assumptions by defining the subclass of realizable policy sets for linear value function approximation. Based on the concept of partial $q^{\pi}$-realizability, we obtain two main results. First, we define an instance of partial $ q^{\pi} $-realizability in which the policy set $ \Pi $ consists of \textit{deterministic greedy policies}. We show that learning a near-optimal policy in this setting is NP-hard. This result provides a partial negative answer to the aforementioned question Q1---it is not possible to ensure computational efficiency when access is limited to only a subclass of linearly realizable policies. Second, we move beyond the deterministic greedy policy set and study partial $q^{\pi}$-realizability under a more general policy set based on \textit{stochastic softmax policies}. Under the randomized exponential time hypothesis (rETH), we prove that this problem setting exhibits \textit{exponential computational hardness}, mirroring the result for the $q^*$-realizable setting. This result again provides a partial negative answer to the aforementioned question Q2---the computational hardness remains unbreakable even when considering a policy set richer than the optimal policy $\{\pi^*\}$. 

As a broader implication, our hardness results for the partial $q^{\pi}$-realizability setting also show the existence of computational challenges inherent in Agnostic RL \citep{jia2023agnostic} within the linear function approximation context.

\section{Preliminaries} \label{Ch2:Background-RelatedWorks}

\noindent\textbf{Markov Decision Process.}
In reinforcement learning \citep{Sutton1998,10.5555/560669999999999999,10.5555/528623}, an agent aims to maximize its expected total reward over time by making optimal decisions through interaction with its given environment. In our problem setting, the environment is modeled as a Markov Decision Process (MDP) $ M \coloneqq \langle \mathcal{S}, \mathcal{A}, \mathcal{P}, R, H \rangle $, where $ \mathcal{S} $ is a discrete set of state, $ \mathcal{A} $ is a discrete set of actions, $ \mathcal{P}: \mathcal{S} \times \mathcal{A} \to \Delta(\gS) $ is the stochastic transition probability function which maps each $(s,a) \in \gS \times \gA$ to probability distribution over states, and $ R: \mathcal{S} \times \mathcal{A} \to \Delta([0,1]) $ is the stochastic reward function, where $\Delta(X)$ is the set of probability distributions supported on set $X$. When we take action $a$ in state $s$, we get sampled next state $ s' \sim \gP(s,a) $, and $ R \sim R(s,a) $ is a stochastic reward received for taking action $ a $ in state $ s $. In the episodic setting, the agent interacts with the environment starting from an initial state $s_1$, where the subscript denotes the time step or stage. During an episode, at each step $h$, taking an action $a_h$ in state $s_h$ results in a deterministic transition to the next state $s_{h+1}$ and yields a reward $R_h$. The episode continues until the agent reaches a terminal state set $\mathcal{S}_{T} \subset \mathcal{S}$ or meets another predefined stopping condition. We denote by $\mathcal{S}_h$ the set of states reachable from the initial state $s_1$ after exactly $h$ steps. Without loss of generality, we assume the state space $\mathcal{S}$ is partitioned into $H$ disjoint subsets, such that $\mathcal{S} = \mathcal{S}_1 \cup \mathcal{S}_2 \cup \dots \cup \mathcal{S}_H$ and $\mathcal{S}_{i} \cap \mathcal{S}_{j} = \emptyset$ for any distinct $i,j \in [H]$. Additionally, we consider deterministic stationary policies $\pi: \mathcal{S} \to \mathcal{A}$, mapping each state to a single action, as well as stochastic policies $\pi: \mathcal{S} \to \Delta(\mathcal{A})$, which map each state $s \in \mathcal{S}$ to a probability distribution over actions. The state-value function $ v^{\pi}(s) $ and the action-value function $ q^{\pi}(s,a) $ are defined for any $s \in \mathcal{S}$ and $(s,a) \in \mathcal{S}\times\mathcal{A}$ under any policy $\pi$ as follows:
\begin{align*}
v^{\pi}(s) \coloneqq\ & \mathbb{E}_{\pi,s} \left[\sum_{h=1}^{H} R_t \mid s_1=s \right], 
q^{\pi}(s,a) \coloneqq\  \mathbb{E}_{\pi,s,a} \left[\sum_{h=1}^{H} R_t \mid s_1=s, a_1=a \right].
\end{align*}
The optimal policy $ \pi^* $ yields the optimal state-value function $ v^*(s) $ and action-value function $ q^*(s,a) $,  formally defined as follows:
\begin{align*}
v^*(s) =\ &  \sup_{\pi} v^{\pi}(s) = \sup_{a \in A} q^{\pi^*}(s,a), \
q^*(s,a) =\  \sup_{\pi} q^{\pi}(s,a) = R(s,a) + \sup_{\pi}  v^{\pi}(s'),
\end{align*}
where $ s' $ is the next state after taking action $ a $ in state $ s $. Furthermore, the interaction protocol between the agent and the environment is based on the \emph{generative model}. The generative model—often referred to as the random access model \citep{Kakade2003OnTS,sidford2019nearoptimaltimesamplecomplexities,yang2019sampleoptimalparametricqlearningusing}—represents the strongest form of agent-environment interaction. Under this model, the learner can query a simulator with any $(s, a) \in (\gS \times \gA)$ to obtain a sample $(s', R)$, where $s' \sim \gP(s,a)$ and $R \sim R(s,a)$. This allows access to any state-action pair, regardless of whether the state has been previously visited. Throughout this paper, we assume the agent has access to a generative model.

\noindent\textbf{Linear Function Approximations.} \label{Ch2-Sec3:RL+LFA}
In the function approximation setting, given a policy $\pi$, the action-value function $q^{\pi}(s_h, a)$ for any $(s_h, a) \in \mathcal{S}_h \times \mathcal{A}$ can be represented using a feature mapping $\phi: \mathcal{S} \times \mathcal{A} \to \mathbb{R}^d$, which assigns a $d$-dimensional feature vector to each state-action pair. We assume a linear function approximation, meaning there exists a weight vector $\theta_h \in \mathbb{R}^d$ such that  
\begin{align}
q^{\pi}(s_h, a) = \langle \phi(s_h, a), \theta_h \rangle.
\end{align}
This formulation expresses the action-value function as the inner product between the feature representation and the weight vector. Furthermore, we provide additional details on the different realizability settings in Appendix \ref{Ch2-Sec4: Previous-Results}.

\noindent\textbf{3-SAT and Randomized Exponential Time Hypothesis.}
In the remainder of this section, we provide the relevant background on computational complexity.
 Let $\varphi$ be a Boolean formula, where the terms $x_i$ and $\bar{x_i}$ are referred to as \emph{literals}. A formula $\varphi$ is said to be in \emph{Conjunctive Normal Form (CNF)} if it can be expressed as 
\begin{align*}
    \varphi = C_1 \wedge C_2 \wedge \dots \wedge C_n,
\end{align*} 
where $C_i$'s are called clauses and consist of disjunctions (ORs) of literals. In general, a $k$-SAT formula refers to a CNF-SAT problem in which each clause contains exactly $k$ literals. For 3-SAT, an NP-complete problem~\citep{Cook1971,Levin73}, the \emph{Randomized Exponential Time Hypothesis (rETH)} is defined as follows:

\begin{definition}[Randomized Exponential Time Hypothesis (rETH) \citep{Dell_2014}]\label{Def:rETH}
There exists a constant $c > 0$ such that no randomized algorithm can solve the 3-SAT problem with $v$ variables in $2^{cv}$ time, with an error probability of at most $\frac{1}{3}$.
\end{definition}
Definition \ref{Def:rETH} emphasizes on the fact that randomized algorithms cannot solve NP-complete problems in sub-exponential time. As we will see, we use the rETH to establish an exponential lower bound for partial $q^\pi$-realizability setting under the softmax policy class $\Pi^{sm}$.

\section{Problem Statement and Main Results}\label{Ch3:ProblemSetting-Proof}
We begin this section by introducing the concept of \emph{partial $q^{\pi}$-realizability}, followed by formal descriptions of our problem settings under the \emph{greedy policy set} $\Pi^g$ and the \emph{softmax policy set} $\Pi^{sm}$. 
We then present our two main hardness results, stated in Theorem~\ref{main-thm} (under $\Pi^g$) and Theorem~\ref{thm:softmax} (under $\Pi^{\mathrm{sm}}$).

\subsection{Problem Statement and Definitions} \label{Ch3-Sec1:Problem-Definition}

Let us first define the concept of \emph{partial $ q^{\pi}$-realizability} in Definition \ref{def:partial-realizability} below.
\begin{definition} [\textbf{Partial $q^{\pi}$-realizability under $ \Pi $}] \label{def:partial-realizability}
    Given a policy set $\varPi \subset \mathcal{A}^{\mathcal{S}}$ and a feature vector $\phi: \mathcal{S} \times \mathcal{A} \to \mathbb{R}^{d}$, an MDP is said to be partially $q^{\pi}$-realizable under $\varPi$ if, for all $\pi \in \varPi$, there exists $\theta_h \in \mathbb{R}^{d}$ such that:
    \begin{align*}
    q^{\pi}_{h}(s_h,a_h) = \langle \phi(s_h,a_h), \theta_h \rangle \quad \forall (s_h,a_h) \in \mathcal{S}_h \times \mathcal{A}. 
    \end{align*}
\end{definition}

Definition \ref{def:partial-realizability} ensures the exact linear realizability when using $\phi \in \mathbb{R}^{d}$ under a subset of policies $ \Pi  \subset \mathcal{A}^{\mathcal{S}}  $. Clearly, partial $q^{\pi}$-realizability is  a weaker assumption than the $q^{\pi}$-realizability assumption, since we only assume linear realizability under a given policy set instead of under all the possible policies. On the other hand, partial $q^{\pi}$-realizability is stronger than $q^*$-realizability since the policy set $ \Pi $ may include the optimal policy. Thus, by Definition \ref{def:partial-realizability}, we can bridge the gap between these two extreme cases of realizability assumptions. We refer to $\phi$ and $\theta$ as partial realizability \textit{feature} and \textit{weight} vectors, respectively. 

\begin{definition}[\textbf{Learner's objective: $\epsilon$-optimality relative to $ \Pi $}] \label{Def:Optimality}
Given an initial state $s_1 \in \gS $, a policy $\pi \in \gA^{\gS}$ is $\epsilon$-optimal with respect to the best policy in $\Pi$ if
\begin{align} \label{epsilon_optimality}
\max_{ \hat{\pi} \in \Pi}v^{\hat{\pi}}(s_1)- v^{\pi}(s_1) \leq \epsilon.
\end{align}
In the case of a learner that employs randomized algorithms, we return the policy $\pi$ that satisfies the above condition with high probability. Note that the learned policy $\pi$ is deterministic when the policy class $\Pi$ is deterministic, and stochastic when $\Pi$ is stochastic.
\end{definition}

\noindent\textbf{Greedy Policy Set $\Pi^g$.}
As our first attempt to have a well-defined $\Pi$, we define a special class of policy set called \emph{greedy policy set}, which is parameterized by a given feature vector $\phi' \in \mathbb{R}^{d'}$. The details regarding this parameterization are given in the definition below.
\begin{definition}[\textbf{Greedy policy set $ \Pi^{g} $}]
    \label{grd-act}
    Let $\phi': \gS \times \gA \to \sR^{d'}$ be a feature vector with dimension $d' \in \sN$.
    For any $h \in [H]$ and $\theta' \in \sR^{d'}$, let  $\pi^g_{\theta'} : \gS_h \to \gA$ be defined as follows:
    \begin{align}
    \pi^g_{\theta'}(s_h) 
    := \arg\max_{a \in A}\  \langle \phi'(s_h,a), \theta' \rangle
    \qquad  \forall s_h \in \gS_h.
    \end{align} 
    In the event of a tie in the argmax, the action with the lowest index is selected.  The greedy policy set, induced by any \( \theta' \in \mathbb{R}^{d'} \), is defined as:
\begin{align}
    \Pi^{g} := \{\pi^g_{\theta'} | \theta' \in \mathbb{R}^{d'}\}.
\end{align}
\end{definition}
By Definition \ref{grd-act}, the action \(\pi^{g}_{\theta'}(s_h)\) is greedy with respect to  $\langle \phi'(s_h,a), \theta \rangle$ (i.e., argmax). Also note that changing $\theta'$ can result in different greedy actions at each state $s_h \in \gS_h$. Thus, Definition \ref{grd-act} implies that for different values of $\theta'$, we may obtain different greedy actions $\pi^{g}_{\theta'}(s_h)$ for each $s_h \in \mathcal{S}_h$. We refer to $\phi'$ and $\theta'$ as the \emph{policy set parametrization} (PSP) feature and weight vectors, respectively. We are now ready to formally define our first complexity problem:

\begin{center}
\begin{minipage}{\columnwidth}
    \hrulefill\\[5pt]
    \textbf{Complexity Problem}\quad \GLRL \\[3pt]
    \begin{tabular}{@{}p{0.15\columnwidth} @{}p{0.8\columnwidth}}
         \textit{Input:} 
         & An MDP $M$ that is partially $q^{\pi}$-realizable under $\Pi^g$ with $\kappa$ actions, having access to  feature vectors ($\phi \in \mathbb{R}^{d}$ and $\phi' \in \mathbb{R}^{d'}$), horizon $H = \Theta(d^{\frac{1}{3}})$, and state space of size exp($\Theta(d^{\frac{1}{3}})$). \\[6pt]
         \textit{Goal: } 
         & Find a policy $\pi$ such that it satisfies the $\epsilon$-optimality condition in \Eqref{epsilon_optimality}.
    \end{tabular}  \\[5pt]
     \vphantom{.}\hrulefill
    \label{Ch3:GLRL-Problem}
\end{minipage}
\end{center}

In the complexity problem \textbf{\GLRL}, ``\textsc{g}" stands for greedy. As a special instance of \GLRL, we define the two-action version of the \GLRL problem as \TGLRL.

\noindent\textbf{Softmax Policy Set $\Pi^{sm}$.} Softmax policies are widely used in RL, particularly in methods such as policy gradients \citep{sutton2000policy,SAC,PPO,TRPO} and actor-critic algorithms \citep{ActorCritic1}. Their effectiveness in exploration and smooth parameterization makes softmax a natural choice in these settings. Let us first review the definition of the well-known softmax policies.

\begin{definition}[\textbf{Softmax Policy Set $ \Pi^{sm}$}] \label{Def:Softmax} 
    Let $\phi': \gS \times \gA \to \mathbb{R}^{d'}$ be a feature vector with dimension $d' \in \mathbb{N}$. For any $h \in [H]$ and $\theta' \in \mathbb{R}^{d'}$, let  $\pi_{\theta'}: \gS_{h} \to \Delta(\gA)$ be defined as follows:
    \begin{align}
            \pi_{\theta'}(a|s_h) = \frac{e^{\phi'(s_h,a)^{\intercal}\theta'}}{\sum_{i=1}^{\kappa} e^{\phi'(s_h,a_i)^{\intercal}\theta'}}.
            \quad \quad \forall s_h \in \gS_h.
    \end{align}
        Here, $\kappa$ represents the total number of actions available in state $s_h \in \gS_h$. The policy $\pi_{\theta'}(s_h)$ defines a probability distribution over the actions, which is updated as $\theta'$ changes. Note that we use the softmax without a temperature parameter. The softmax policy set, induced by any $ \theta' \in \mathbb{R}^{d'} $ , is defined as:
    \begin{align}
        \Pi^{sm} := \{\pi_{\theta'} | \theta' \in \mathbb{R}^{d'}\}.
    \end{align}
\end{definition}

Following the format of the complexity problem \GLRL, here we define our second complexity problem, termed \textbf{\SLRL}, which has the same objective as \GLRL but under the softmax policy set $\Pi=\Pi^{sm}$. In the complexity problem \SLRL, ``\textsc{s}" stands for softmax. As a special instance of \SLRL, we define the two-action version of the \SLRL problem as \TSLRL.
 
\subsection{Main Results: Hardness of Solving \GLRL and \SLRL}\label{Ch3-Sec2:Main-Results}
We show that solving the complexity problems \GLRL and \SLRL in  a computational efficient way is impossible. The formal result under $\Pi^g$ is given in Theorem \ref{main-thm} below.

\begin{theorem}[\textbf{NP-Hardness of \GLRL}]
    \label{main-thm}
    Solving the \textbf{\GLRL} problem for $\epsilon \leq 0.05$ is NP-hard. More specifically, there exists no polynomial-time algorithm (in terms of the parameters $(d, d', H)$) to compute an $\epsilon$-optimal policy for the \GLRL problem, unless $\text{P = NP}$.
\end{theorem}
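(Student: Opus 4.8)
The plan is to prove hardness by a polynomial-time reduction from \DTMaxSAT, the gap version of Max-3SAT that remains NP-hard via the PCP theorem: given a 3-CNF formula $\varphi$ on $v$ variables and $m$ clauses, one must distinguish the case that $\varphi$ is satisfiable from the case that every assignment falsifies at least a $\delta$-fraction of its clauses. The reduction produces, in time polynomial in $v$ and $m$, a \emph{succinctly represented} \GLRL instance---an oracle/simulator for the MDP $M$ together with the two feature maps $\phi \in \sR^{d}$ and $\phi' \in \sR^{d'}$---whose best achievable value over the greedy set $\Pi^g$ encodes the maximum fraction of clauses that $\varphi$ can satisfy. Because any efficient solver for \GLRL runs in $\mathrm{poly}(d,d',H)$ time and the construction keeps $d,d',H$ polynomial in $v$, such a solver would decide the gap problem in polynomial time, forcing $\mathrm{P}=\mathrm{NP}$.

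First I would encode truth assignments into the policy-set parametrization. The PSP weight $\theta' \in \sR^{d'}$ represents an assignment---say the sign pattern of a coordinate block of $\theta'$ fixes the value of each $x_i$---and the PSP feature $\phi'$ is chosen so that at every ``variable-reading'' state the greedy action $\pi^g_{\theta'}(s_h) = \argmax_{a}\langle \phi'(s_h,a),\theta'\rangle$ reveals exactly the truth value that $\theta'$ assigns to the variable queried at $s_h$. Since one $\theta'$ drives the greedy choice at every layer, each policy in $\Pi^g$ corresponds to a single globally consistent assignment. I would then lay out $M$ in $H$ layers that traverse the clauses of $\varphi$: from $s_1$ the (possibly stochastic) dynamics select a clause and route the agent through the variable-reading states governing its three literals, collecting a normalized reward equal to the indicator that the clause is satisfied under the assignment read off from $\theta'$. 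By construction $v^{\pi^g_{\theta'}}(s_1)$ is then an affine function of the fraction of clauses satisfied, so completeness gives a best-in-$\Pi^g$ value of $1$ while soundness caps it below $1-\delta$.

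Next I would certify partial $q^{\pi}$-realizability under $\Pi^g$: that for \emph{every} $\theta'$ the action-value function $q^{\pi^g_{\theta'}}_h$ is exactly linear in $\phi$. The idea is to build $\phi$ from a tensor/polynomial expansion of the per-layer state indicators so that the backward Bellman recursion for $q^{\pi}_h$ stays inside $\mathrm{span}(\phi)$ no matter which greedy policy is active. This is precisely where the stated scaling enters: enumerating (partial) assignments over $v=\Theta(H)$ variables forces an $\exp(\Theta(d^{1/3}))$ state space, while the polynomial expansion needed to close the recursion fixes $d=\Theta(H^{3})$, i.e. $H=\Theta(d^{1/3})$. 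Finally, I would calibrate the reward normalization and the gap constant $\delta$ so that the value separation between completeness and soundness exceeds $0.05$; then any $\epsilon$-optimal policy with $\epsilon\le 0.05$ distinguishes the two cases, and the reduction is complete.

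The main obstacle is the realizability certification. Partial $q^{\pi}$-realizability is rigid---every $\theta'$ must yield a value function lying in one fixed low-dimensional span---so the feature design, the layered transition structure, and the reward placement cannot be chosen independently. Making the Bellman backups remain linear across the \emph{entire} family $\{\pi^g_{\theta'} : \theta' \in \sR^{d'}\}$, rather than for a single fixed policy, is the delicate technical heart of the argument, and it is this requirement that both dictates the polynomial feature dimension and pins down the particular $H=\Theta(d^{1/3})$ regime.
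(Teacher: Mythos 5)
Your high-level strategy is the same as the paper's (a polynomial-time reduction from \DTMaxSAT, truth assignments encoded in the sign pattern of $\theta'$, value of a greedy policy equal to the fraction of satisfied clauses, and the completeness/soundness gap absorbed into $\epsilon \le 0.05$), but your MDP design breaks the step that actually closes the reduction. You route the agent clause-by-clause through ``variable-reading'' states and pay a per-clause reward; since those states must remember which clause (and literal position) is being processed, an \emph{unrestricted} policy can assign the same variable inconsistently across clauses and achieve value $1$ even when $\varphi$ is far from satisfiable. The learner's guarantee in Definition~\ref{Def:Optimality} is only one-sided relative to $\Pi^g$ --- it returns \emph{some} $\pi$ with $v^{\pi}(s_1)\ge \max_{\hat\pi\in\Pi^g}v^{\hat\pi}(s_1)-\epsilon$ --- so in the soundness case it may return exactly such a cheating policy of value $1$; neither the value of the returned policy nor its roll-out then distinguishes ``Yes'' from ``No''. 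The paper avoids this by building the MDP as a binary tree over partial assignments, where layer $h$ assigns variable $x_h$ and the state records all previous choices: there every policy (greedy or not) induces a distribution over \emph{consistent} assignments, hence $\pi^*\in\Pi^g$ and $v^{\pi}(s_1)$ of the returned policy directly yields an assignment satisfying at least $(1-\delta+2\epsilon)-\epsilon\ge 1-\delta$ of the clauses (Appendix~\ref{Reduction-Ph2: greedy-Remain}). You would need to either force consistency structurally, as the paper does, or strengthen the extraction argument; as written, Step~2 fails.

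Separately, the realizability certification --- which you correctly identify as ``the delicate technical heart'' --- is not actually supplied: ``a tensor/polynomial expansion of the per-layer state indicators'' is a plan, not a construction, and it is precisely this part that the paper proves as Proposition~\ref{propos : realizability}. The paper's concrete choice is $\phi(s_h,a)=\frac{1}{|\mathcal{C}|}\,[\,b_h,\,Y_h\,]$, where $b_h$ counts clauses already decided true and $Y_h$ counts the multiplicities of all $O(n^3)$ possible simplified undecided clauses, paired with $\theta_h^{\intercal}=[1,M_h]$, where $M_h$ marks which undecided clause types the greedy look-ahead continuation will satisfy; linearity of $q^{\pi}_h$ for \emph{every} $\theta'$ then follows by backward induction. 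This construction is also what pins down the advertised scaling ($d=\Theta(n^3)$, $H=n+1=\Theta(d^{1/3})$, state space $2^{n+1}-1=\exp(\Theta(d^{1/3}))$), which your proposal asserts but cannot derive without it.
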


Theorem \ref{main-thm} demonstrates that achieving $\epsilon$-optimality for small values of $\epsilon$ in \TGLRL is computationally intractable. Notably, Theorem \ref{main-thm} reveals that under a specified greedy policy set (Definition \ref{grd-act}), the results diverge from those achievable in the computationally efficient $q^{\pi}$-realizability setting. On the other hand, Theorem \ref{main-thm} suggests that even with an expanded policy set, existing computational hardness results for the $q^*$-realizability setting remain unbroken. The main idea of the proof is to reduce an NP-hard problem to \GLRL in polynomial time. We provide a full proof of Theorem \ref{main-thm} in the next section. 

As our second contribution to the hardness result, we leverage the randomized exponential time hypothesis (Definition \ref{Def:rETH}) to derive a computational lower bound for solving the \TSLRL problem. In the softmax policy set regime, algorithms that interact with the MDP inherently involve internal randomness. Therefore, a standard NP-hardness result is insufficient, as it only rules out polynomial-time \emph{deterministic} algorithms and does not capture the limitations of randomized methods. To address this gap, we strengthen our hardness result under the more stringent computational assumption of rETH, which goes beyond the classical NP $\neq$ P conjecture. Our main theorem regarding the hardness of the \TSLRL problem is stated below.
\begin{theorem}[\textbf{Hardness under $\Pi^{sm}$}]\label{thm:softmax}
Under rETH, there exist some small constant $\epsilon_0$ such that for any $\epsilon \le \epsilon_0 $, no randomized algorithm can solve the \textbf{\SLRL} problem in time $ \exp\bigg(o\Big(\frac{d^{\frac{1}{3}}}{\mathrm{polylog}(d^{\frac{1}{3}})} \Big)\bigg)$  with error probability $\frac{1}{10}$, where $ d $ denotes the dimension of the partial realizability feature vector $ \phi$.
\end{theorem}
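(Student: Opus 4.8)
\textbf{Proof proposal for Theorem~\ref{thm:softmax}.}
The plan is to reduce from \DBTMaxSAT (the bounded-occurrence, gap version of \textsc{Max-3SAT}), which under rETH admits no sub-exponential randomized algorithm, to the \TSLRL problem, and to track the feature dimension carefully so that a fast algorithm for \TSLRL would yield a too-fast algorithm for \DBTMaxSAT. Concretely, given a 3-SAT instance $\varphi$ with $v$ variables and $m=\Theta(v)$ clauses (the bounded-occurrence restriction guarantees $m=\Theta(v)$, which is what makes the dimension bookkeeping work), I would build an MDP $M_\varphi$ that is partially $q^\pi$-realizable under the softmax policy set $\Pi^{sm}$, with horizon $H=\Theta(d^{1/3})$ and state space of size $\exp(\Theta(d^{1/3}))$, exactly as demanded by the \SLRL input specification. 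The construction should be a softmax analogue of the hard instance used for Theorem~\ref{main-thm}: the PSP feature/weight vectors $(\phi',\theta')$ parametrize how the softmax policy distributes mass over literal-assignments, and the partial-realizability features $\phi$ encode clause-satisfaction values so that $v^\pi(s_1)$ is (up to normalization) the expected fraction of satisfied clauses under the assignment sampled by $\pi_{\theta'}$. The intended behaviour is that a near-optimal softmax policy must concentrate on a near-optimal assignment, so that an $\epsilon$-optimal solution with $\epsilon\le\epsilon_0$ distinguishes the two sides of the \DBTMaxSAT gap.

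The key steps, in order, would be: (i) define the gadget states for each variable and clause and specify $\phi,\phi'$ so that Definition~\ref{def:partial-realizability} holds for \emph{every} $\pi_{\theta'}\in\Pi^{sm}$ — this requires exhibiting, for each $\theta'$ and each stage $h$, a weight vector $\theta_h$ realizing $q^{\pi_{\theta'}}_h$ linearly, which is the crux of making the instance a legitimate partial-$q^\pi$-realizable MDP rather than merely a $q^*$-realizable one; (ii) establish the value-to-satisfaction correspondence, i.e. $\max_{\hat\pi\in\Pi^{sm}}v^{\hat\pi}(s_1)$ is close to the optimal satisfied-clause fraction, with the softmax smoothing controlled by scaling $\theta'$ (large $\|\theta'\|$ drives the softmax toward a deterministic greedy assignment, recovering the combinatorial optimum up to an arbitrarily small slack); (iii) choose $\epsilon_0$ smaller than the \DBTMaxSAT gap so that solving \TSLRL to accuracy $\epsilon\le\epsilon_0$ decides $\varphi$; and (iv) verify the dimension accounting: arrange $d=\Theta(v^3)$ (hence $v=\Theta(d^{1/3})$ and $H=\Theta(d^{1/3})$), so that a randomized \TSLRL solver running in time $\exp(o(d^{1/3}/\mathrm{polylog}(d^{1/3})))$ would solve 3-SAT in $\exp(o(v/\mathrm{polylog}\,v))=2^{o(v)}$ randomized time, contradicting rETH. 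The error-probability bookkeeping (boosting the stated $\tfrac{1}{10}$ to the $\tfrac13$ of Definition~\ref{Def:rETH} by a constant number of independent repetitions) is routine and should be handled at the end.

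The main obstacle I anticipate is step~(i) together with the smoothing analysis in step~(ii): unlike the greedy/argmax case, a softmax policy assigns positive probability to \emph{all} assignments, so $v^\pi(s_1)$ is a full expectation over exponentially many literal-configurations rather than the value of a single assignment. Two difficulties compound here. First, proving linear realizability of $q^{\pi_{\theta'}}_h$ for all softmax policies is strictly harder than for deterministic greedy policies, because the action-value now depends smoothly on $\theta'$ through the downstream softmax distribution; the construction must be engineered so that this dependence still collapses into a fixed-dimensional inner product $\langle\phi(s_h,a),\theta_h\rangle$, likely by making the gadget rewards additively separable across variables so that the expectation factorizes and the per-stage value remains linear in a bounded-dimension feature. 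Second, I must quantitatively control the gap between the softmax optimum and the true combinatorial optimum: I would argue that for suitably scaled parameters the optimal softmax value converges to the optimal-assignment value fast enough that the rETH-relevant gap survives, which is exactly why the theorem yields only \emph{some} constant $\epsilon_0$ rather than the explicit $0.05$ of the deterministic case. Everything else — the MDP layering into $\gS_1\cup\dots\cup\gS_H$, the generative-model access, and the final contrapositive with rETH — should follow the template of Theorem~\ref{main-thm} with the softmax replacing the argmax.
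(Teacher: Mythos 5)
Your proposal reproduces the paper's skeleton — reduce \DBTMaxSAT (hard under rETH) to \TSLRL, reuse the binary-tree MDP from Theorem~\ref{main-thm}, get $d=\Theta(n^3)$ so that a $\exp\big(o(d^{1/3}/\mathrm{polylog}(d^{1/3}))\big)$-time RL solver contradicts the rETH lower bound — and it correctly flags realizability under \emph{all} softmax policies as a crux. For that step the paper's mechanism is worth knowing: since the feature $\phi(s_h,a_h)$ does not depend on the downstream trajectory, one writes $q_h^{\pi}(s_h,a_h)=\sum_{\tau}P^{(\tau)}\langle\phi(s_h,a_h),\theta_h^{(\tau)}\rangle$ and moves the mixture onto the weight side, setting $\theta_h=\sum_{\tau}P^{(\tau)}[1,M_h^{(\tau)}]^{\intercal}$; no additive separability of rewards is needed, just trajectory-independence of $\phi$ and linearity of the inner product.

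The genuine gap is in your Step-2, the extraction direction. An $\epsilon$-optimal softmax policy only guarantees $\mathbb{E}[R(s_H)]\ge |\mathcal{C}^*|/|\mathcal{C}|-\epsilon$, but the SAT algorithm must output a single assignment satisfying a $(1-\delta)$-fraction of clauses; a stochastic policy samples an assignment, and your assertion that a near-optimal softmax policy ``must concentrate on a near-optimal assignment'' is unjustified (and false as stated — it may spread mass over many assignments, and high expected reward alone does not give a high-probability guarantee on one sample). The paper closes exactly this hole with McDiarmid's inequality: the terminal reward has the bounded-differences property with $r_i=b/|\mathcal{C}|$ \emph{because each variable appears in at most $b$ clauses}, and since $|\mathcal{C}|=O(n)=O(H)$ the deviation term is $\frac{b}{|\mathcal{C}|}\sqrt{H\ln(1/\mathfrak{p}_0)/2}=O(1/\sqrt{H})$, yielding $\Pr[R(s_H)\ge 1-\delta]\ge 1-\mathfrak{p}_0$ whenever $\epsilon\le v^*-3.06\cdot O(1/\sqrt{H})-0.9$; this is precisely where the theorem's constant $\epsilon_0$ and the error-probability arithmetic ($\tfrac{9}{10}\cdot\tfrac{7}{8}>\tfrac{2}{3}$) come from. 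Relatedly, you misattribute the role of the bounded-occurrence condition: you invoke it only to get $m=\Theta(v)$ for ``dimension bookkeeping,'' but the dimension $d=O(n^3)$ comes from enumerating $\mathcal{C}_{\mathrm{total}}$ (all valid clauses over $n$ variables), independently of $m$; the real reason \DBTMaxSAT rather than plain $\delta$-\textsc{Max-3SAT} is needed is the $r_i=b/|\mathcal{C}|$ bound in the concentration step. Without some such tail argument (McDiarmid, or at minimum a reverse-Markov bound plus repeated sampling and clause-checking to boost the success probability), your reduction does not produce a valid \DBTMaxSAT algorithm, and the proof does not go through.
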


Theorem~\ref{thm:softmax} establishes the computational hardness of achieving an $\epsilon$-optimal policy under partial realizability of the softmax policy class $\Pi^{sm}$. It is worth noting that, although $\Pi^g \subset \Pi^{sm}$ holds with high probability, 
the hardness result for \GLRL relies solely on the standard assumption that $\mathrm{NP} \neq \mathrm{P}$, whereas the stronger result for \SLRL requires the more stringent randomized Exponential Time Hypothesis (rETH). An additional insight from Theorems~\ref{main-thm} and~\ref{thm:softmax} is that both \GLRL and \SLRL are NP-hard, revealing a subtle paradox: even when the policy class is infinite and linearly realizable, learning under partial $q^{\pi}$-realizability remains computationally intractable.

\section{Proof Sketch of Theorem \ref{main-thm}}
In this section, we provide a high-level sketch of the proof of Theorem~\ref{main-thm}, outlining the core techniques and ideas used to establish the hardness result. The proof consists of two main components, which we describe at a conceptual level. Since the techniques developed for \TGLRL readily extend to the \TSLRL setting with only minor modifications, we defer the full proof of the hardness result under softmax policies to Appendix~\ref{Appendix:Softmax}.

\subsection{An Overview of Our Proof}
\label{Ch3-Sec3:Proof-overview}

It is known that reduction procedures are essential for establishing computational lower bounds. Specifically, to prove that a problem $ \mathcal{L}_{1} $ is NP-hard, we reduce a known NP-hard problem $ \mathcal{L}_{2} $ to $ \mathcal{L}_{1} $. In this context, we use the decision version of the \MaxSAT problem, referred to as $\delta$-\MaxSAT. The formal definition of \MaxSAT is given as follows:

\begin{definition}[\MaxSAT] \label{max3sat-def}
    Given a 3-CNF Boolean formula \(\varphi\) with \(n\) variables, denoted by the set $\mathcal{X} = \{x_1, \ldots, x_n\}$, and $k$ clauses, denoted by the set $\mathcal{C} = \{c_1, c_2, \ldots, c_k\}$, the \MaxSAT problem is to find an assignment that satisfies the maximum number of clauses in $\varphi$.
\end{definition}

Let $\mathcal{X}_{\text{assign}} = \{0, 1\}^n$ denote the set of all possible assignments to the variables in $\mathcal{X} = \{x_1, \ldots, x_n\}$. Each element $x \in \mathcal{X}_{\text{assign}}$ represents an assignment where $ x_i \in \{0, 1\}$ gives the truth value of $x_i$ for $i = 1, \ldots, n$. For any $ x \in \mathcal{X}_{\text{assign}}$, let $ \mathcal{C}_{\text{true}} (x) $ and $ \mathcal{C}_{\text{false}} (x) $ denote the set of clauses that are satisfied and unsatisfied, respectively. Clearly, $ |\mathcal{C}_{\text{true}} (x)| + |\mathcal{C}_{\text{false}} (x)| = |\mathcal{C}| $ holds for all $ x \in \mathcal{X}_{\text{assign}} $. The goal of \MaxSAT is to find an assignment $ x \in \mathcal{X} $ such that $ |\mathcal{C}_{\text{true}} (x)| $ is maximized.  We also define the decision version of \MaxSAT as follows:
\begin{definition}[$\delta$-\MaxSAT]
    Given a \MaxSAT problem $\varphi$ with \(n\) variables, denoted by the set $\mathcal{X} = \{x_1, \ldots, x_n\}$, $\delta$-\MaxSAT is defined as the following decision problem: output ``Yes" if we can find an assignment $ x \in \mathcal{X}_{\text{assign}}$ such that $ \frac{|\mathcal{C}_{\text{true}} (x)|}{|\mathcal{C}|} \geq 1 - \delta $, i.e., at least $ 1 - \delta $ fraction of the clauses are satisfied; output ``No" otherwise.
\end{definition}

The following lemma shows that for any $\delta < \frac{1}{8} $, solving \(\delta\)-\MaxSAT is NP-hard. 
\begin{lemma}[NP-hardness of $\delta$-\MaxSAT \cite{inapprox}]
\label{lem:hardness_MAT_3SAT}
It is NP-hard to approximate the \MaxSAT problem within a factor of \( \frac{7}{8} \). Specifically, unless P = NP, no polynomial-time algorithm can guarantee that more than \( \frac{7}{8} \) of the maximum number of satisfiable clauses are satisfied for any given \MaxSAT formula.
\end{lemma}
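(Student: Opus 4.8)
The plan is to prove the tight $7/8$ inapproximability of \MaxSAT by following Håstad's PCP-based argument, which factors naturally into a sharp hardness result for three-variable linear equations followed by an elementary gadget reduction. First I would invoke the PCP theorem together with Raz's parallel repetition theorem to obtain, for every $\gamma > 0$, a \textsc{Label Cover} instance for which it is NP-hard to distinguish the case where all constraints are simultaneously satisfiable from the case where at most a $\gamma$ fraction can be satisfied. This strong two-sided gap, with $\gamma$ driven arbitrarily close to $0$, is the raw material from which the optimal constant is extracted.

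The core of the argument is to build, from such a \textsc{Label Cover} instance, a system of linear equations over $\mathbb{F}_2$ with exactly three variables per equation (call it \textsc{Max-E3-Lin-2}). Each prover label is encoded by its \emph{Long Code}, i.e.\ the dictator function $f(x) = x_v$ on the Boolean hypercube, and the verifier performs a $3$-query parity test that reads two bits from the codes at the endpoints of a \textsc{Label Cover} edge and a third bit at a point correlated with the first two through the edge's projection constraint and a small amount of noise, accepting iff the three bits XOR to a fixed value. I would make the codes \emph{folded} so that the trivial Fourier character is killed. The goal is to establish completeness $1 - \epsilon$ (honest Long Codes of a satisfying labeling pass with probability close to $1$) and soundness $1/2 + \epsilon$ (any assignment passing with probability noticeably above $1/2$ can be decoded into a labeling satisfying a $\gamma$-fraction of \textsc{Label Cover}, contradicting hardness once $\gamma$ is small).

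The hard part will be the soundness analysis, which is where Fourier analysis enters. I would expand the acceptance probability of the parity test in the Fourier basis of $f$ and $g$; the noise operator damps high-degree characters, so an acceptance probability bounded away from $1/2$ forces a heavy low-degree Fourier coefficient that is \emph{consistent} across the two endpoints of each \textsc{Label Cover} edge via the projection. Sampling characters with probability proportional to their squared Fourier weight then yields a randomized labeling whose expected success probability is bounded below by a function of the test's advantage over $1/2$; matching this against the $\gamma$-soundness of \textsc{Label Cover} closes the contradiction. The delicate steps are the bookkeeping of the projection/consistency constraints across edges and the use of folding to eliminate the trivial coefficient, without which the bound collapses.

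Finally, I would convert the \textsc{Max-E3-Lin-2} hardness into \MaxSAT hardness by the standard $4$-clause gadget: each equation $x_i \oplus x_j \oplus x_k = b$ is replaced by the four $3$-clauses it logically implies, so that an assignment satisfying the equation satisfies all four clauses while a violating assignment satisfies exactly three. A direct count shows that the completeness/soundness pair $(1 - \epsilon,\ \tfrac12 + \epsilon)$ for the equations maps to satisfiable-clause fractions $(1 - \tfrac{\epsilon}{4},\ \tfrac78 + \tfrac{\epsilon}{4})$. For small $\epsilon$ this gap is bounded away from the factor $\tfrac78$, establishing that approximating \MaxSAT within any factor better than $\tfrac78$ is NP-hard, and in particular that $\delta$-\MaxSAT is NP-hard for every $\delta < \tfrac18$, as claimed.
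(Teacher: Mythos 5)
The paper does not prove this lemma at all: it imports it verbatim as H{\aa}stad's optimal inapproximability theorem (the \cite{inapprox} citation), using it as a black box for the subsequent reduction. Your sketch is a correct reconstruction of precisely that cited proof---PCP theorem plus parallel repetition to get Label Cover with arbitrarily small soundness, the folded Long-Code parity test with Fourier-analytic soundness giving the $(1-\epsilon,\ \tfrac12+\epsilon)$ gap for \textsc{Max-E3-Lin-2}, and the standard 4-clause gadget whose arithmetic $(1-\tfrac{\epsilon}{4},\ \tfrac78+\tfrac{\epsilon}{4})$ you compute correctly---so it establishes exactly the statement the paper cites, including the consequence that $\delta$-\MaxSAT is NP-hard for every $\delta < \tfrac18$.
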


The key idea of our proof is to reduce the \DTMaxSAT problem to \TGLRL. Through this reduction process, we show that a solution to \TGLRL can effectively solve \DTMaxSAT, indicating that the \TGLRL problem is at least as hard as \DTMaxSAT for some $\delta < \frac{1}{8} $. In the subsequent sections, our proof involves a two-action version of the \GLRL problem, \TGLRL, and shows that the \TGLRL is NP-hard. More specifically, we denote the learner interacting with \TGLRL as algorithm $\mathcal{A}_{\textsc{rl}}$. We aim to design an $\gA_{\textsc{sat}}$ algorithm---which interacts with \DTMaxSAT---derived from $\gA_{\textsc{rl}}$, an algorithm that interacts with a properly designed MDP $M_{\varphi}$ derived from the \DTMaxSAT instance $\varphi$. Following this idea, we complete the proof of Theorem \ref{main-thm} by showing the following two steps:

\textbf{Step-1: Polynomial construction of $M_{\varphi}$}.  We show that it is possible to design an MDP instance \(M_{\varphi}\) (with the given attributes \TGLRL) from a given \DTMaxSAT instance \(\varphi\) in polynomial time. This requires the polynomial-time construction of the partial realizability features and weight vectors, along with the polynomial-time construction of parametric policies. These steps form the first part of the proof, which is outlined in Section \ref{Ch3-Sec4: Design}.
    
\textbf{Step-2: Algorithmic connection between $\gA_{\textsc{sat}}$ and $\gA_{\textsc{rl}}$}. Recall that the learner interacting with \TGLRL using algorithm $\mathcal{A}_{\textsc{rl}}$, and the algorithm solving $\delta$-\MaxSAT is denoted as $\mathcal{A}_{\textsc{sat}}$. We show that if $\gA_{\textsc{rl}}$ succeeds on $M_{\varphi}$, then $\gA_{\textsc{sat}}$ can be algorithmically derived from $\gA_{\textsc{rl}}$ to solve the \DTMaxSAT instance $\varphi$ with comparable efficacy. Specifically, we show that if the $\delta$-\MaxSAT instance $\varphi$ is $(1-\delta + 2\epsilon)$-satisfiable for some $\epsilon \le \frac{\delta}{2}$, and algorithm $\mathcal{A}_{\textsc{rl}}$ returns an $\epsilon$-optimal policy $\pi$ in solving the MDP instance $M_{\varphi}$, then following the policy $\pi$ for setting the value of variables in our $\varphi$, we can satisfy at least $1-\delta$ fraction of clauses in the $\delta$-\MaxSAT instance $\varphi$, leading to a contradiction and completing the hardness proof.

In the next section, we present the details of Step-1, as the key to our reduction lies in the successful polynomial-time construction of the MDP instance $ M_\varphi $. The details of Step-2 are deferred to Appendix \ref{Reduction-Ph2: greedy-Remain}.

\subsection{An In-depth Look at Step-1: Polynomial Construction of $M_{\varphi}$} \label{Ch3-Sec4: Design}
As outlined in Section~\ref{Ch3-Sec3:Proof-overview}, we must polynomially transform a given \DTMaxSAT instance $\varphi$—with the attributes specified in Section~\ref{Ch3-Sec3:Proof-overview}—into a MDP instance that encapsulates the structural properties of the \TGLRL problem. We assume the \DTMaxSAT instance has $n$ variables and, without loss of generality, that the variables first appear in clauses in the order $(x_1, x_2, \dots, x_n)$.

\subsubsection{Design of the MDP Structure}
We first formalize the structural design of the MDP instance derived from the \DTMaxSAT problem, ensuring the attributes of the \TGLRL problem are embedded within the MDP.  

\noindent\textbf{States, Actions, and Transitions.} Each state is represented as a $n$-tuple of -1, 0, and 1, starting with the initial state given by $s_1 = (-1,-1,\dots,-1)$. For any state $ s_h \in \mathcal{S}_h $, we can take two possible actions $a_h=\{0, 1\}$, namely, the action set of the MDP is $\mathcal{A}=\{0,1\}$. Once action $ a_h $ is taken, the MDP will transit to the new state $s_{h+1}$, where $ s_{h+1} $  mirrors the current state $ s_h $ for all the elements except that the $h$-th element will be changed from -1 to $a_{h}$. For example, if we take action $ a_1 = 0 $ at stage $ h = 1 $, the next state will be $ s_1 = (0, -1, \cdots, -1) $. In general, for any $ h \in [H] $, the state $ s_h $ can be represented as $ s_h = (x_1, \cdots, x_{h-1}, -1, \cdots, -1) $, where $ \{x_i\}_{\forall i\in [h-1]}$ denote the  variables of the \MaxSAT problem that have been assigned based on the actions taken in the first $ h-1 $ stages.  Therefore, the transition dynamics in \TGLRL are deterministic. Namely, starting from any state $ s_h = (x_1, \cdots, x_{h-1}, -1, \cdots, -1) \in \mathcal{S}_h $, selecting an action $ a_h \in \mathcal{A}$ will lead to a deterministic transition to the next state $ s_{h+1} $, defined as follows:
\begin{align*}
    &\Pr[s_{h+1} = (x_1, \cdots,x_{h-1}, 0, -1, \dots, -1) | s_{h}, a_{h}=0] = 1, \\
    &\Pr[s_{h+1} =  (x_1, \cdots,x_{h-1}, 1, -1, \dots, -1) | s_{h}, a_{h}=1] = 1.
\end{align*}
Note that, based on the structure of the action set $\mathcal{A}$ and the state space $\mathcal{S}$, our MDP has a binary tree structure with $2^{n+1} - 1$ states.

\noindent\textbf{Terminal States.} As mentioned earlier, \GLRL is an episodic problem, and we are in a finite horizon setting. Let $\mathcal{S}_{H}$ denote the set of terminal states located at the final stage $ H = n + 1$. Once one of these states is reached, the episode concludes. Note that for any state $ s_H\in \mathcal{S}_H $, all the $ n $ elements are assigned to be 0 or 1, depending on the actions $ a_1, \cdots, a_{H-1} $.

\noindent\textbf{Rewards.} 
The reward of the MDP is zero at all stages except at the final stage $ h= H $. For any final state $ s_H \in \mathcal{S}_H $, the reward is defined as the ratio of the satisfied clauses to the total number of clauses. Namely, the reward \( R(s_h) \) is defined as
\[ \label{reward-func}
R(s_h) =
\begin{cases}
\frac{|\mathcal{C}_{\text{true}}(s_h)|}{|\mathcal{C}|} & \text{if } h = H, \\
0                        & \text{otherwise},
\end{cases}
\]
where $ |\mathcal{C}| $ denotes the cardinality of the set of all the clauses in the \DTMaxSAT problem $ \varphi $, and $ \mathcal{C}_{\text{true}}(s_H) $ denotes the set of satisfied clauses in $ \varphi $ when the variables $ x = (x_1, \cdots, x_n) $ are assigned in an element-wise manner to be the value of the final state $ s_H $, i.e., $ x = s_H $. To better illustrate our MDP design, we provide an example in Figure~\ref{f1}, with additional details presented in Example~\ref{exm:MDP} below. 
\begin{figure}
    \centering
    \begin{tikzpicture}[scale=0.70, level distance=2cm, sibling distance=6cm,
      every node/.style={draw, ellipse, minimum height=0.7cm, minimum width=0.2cm, font=\scriptsize, inner sep=0.5pt},
      edge from parent/.style={draw, -latex},
      level 1/.style={sibling distance=8cm},
      level 2/.style={sibling distance=4cm},
      level 3/.style={sibling distance=1.8cm},
      level 4/.style={sibling distance=2cm},
      every label/.append style={label distance=2mm}, 
      horizon/.style={draw=none, font=\small} 
    ]

    \draw[dotted] (-6,0)  -- ++(15cm,0);
    \draw[dotted] (-6,-2) -- ++(15cm,0);
    \draw[dotted] (-6,-4) -- ++(15cm,0);
    \draw[dotted] (-6,-6) -- ++(16cm,0);
    
    \node[fill=white] at (2,0) (root) {(-1,-1,-1)}
        child [red] {
            node[black, fill=white] { (0,-1,-1) }
            child [black] {
                node[black,fill=white] { (0,0,-1) }
                child [black] {
                    node[black,draw, thick, fill=white] { (0,0,0) }
                    edge from parent node[left, black,draw=none] {False}
                }
                child[black] {
                    node[fill=white] { (0,0,1) }
                    edge from parent node[right,draw=none] {True}
                }
                edge from parent node[left,black,draw=none] {False}
            }
            child[red] {
                node[black, fill=white] { (0,1,-1) }
                child {
                    node[black, thick, fill=red!50] { (0,1,0) }
                    edge from parent node[left,draw=none] {False}
                }
                child[black] {
                    node[fill=white] { (0,1,1) }
                    edge from parent node[right,draw=none] {True}
                }
                edge from parent node[right,draw=none] {True}
            }
            edge from parent node[left, red,draw=none] {False}
        }
        child [blue] {
            node[ black, fill=white] { (1,-1,-1) }
            child [blue] {
                node [ black, fill=white] { (1,0,-1) }
                child [black] {
                    node[fill=white] { (1,0,0) }
                    edge from parent node[left,draw=none] {False}
                }
                child[blue, fill=white] {
                    node [black, fill=blue!30, draw, thick] { (1,0,1) }
                    edge from parent node[right,draw=none] {True}
                }
                edge from parent node[blue, left,draw=none] {False} 
            }
            child [black] {
                node[fill=white] { (1,1,-1) }
                child {
                    node[fill=white] { (1,1,0) }
                    edge from parent node[left,draw=none] {False}
                }
                child {
                    node[fill=white] { (1,1,1) }
                    edge from parent node[right,draw=none] {True}
                }
                edge from parent node[right,draw=none] {True}
            }
            edge from parent node[right, blue , draw=none] {True}
        };
    \node[horizon,anchor=east] at (-6,0) {$h=1$};
    \node[horizon,anchor=east] at (-6,-2) {$h=2$};
    \node[horizon,anchor=east] at (-6,-4) {$h=3$};
    \node[horizon,anchor=east] at (-6,-6) {$h=4$};

    \end{tikzpicture}
    \caption{ Given a \MaxSAT problem $\varphi: (x_1 \vee \bar{x}_2 \vee x_3)\wedge (\bar{x}_1 \vee x_2 \vee \bar{x}_3)$ with $n = 3$ variables, the constructed MDP consists of 15 states, each represented as a $n$-tuple of -1, 0, and 1. The initial state is defined as $(-1, -1, -1)$ (i.e., the root of the tree). Actions are either “True” or “False,” and the total horizon is $H = n + 1 = 4$. Starting from the initial state $(-1, -1, -1)$ at step $h = 1$, if the action “False” is taken (i.e., $a_1 = 0$), the first element of the next state is set to $0$, resulting in the state $(0, -1, -1)$ at step $h = 2$ (the red path). Alternatively, if the action “True” is taken (i.e., $a_1 = 1$), the first element of the next state is set to $1$, leading to the state $(1, -1, -1)$ at step $h = 2$ (the blue path). In the figure, both the red and blue paths yield a total reward of $\frac{1}{2}$.}
    \label{f1} 
\end{figure}
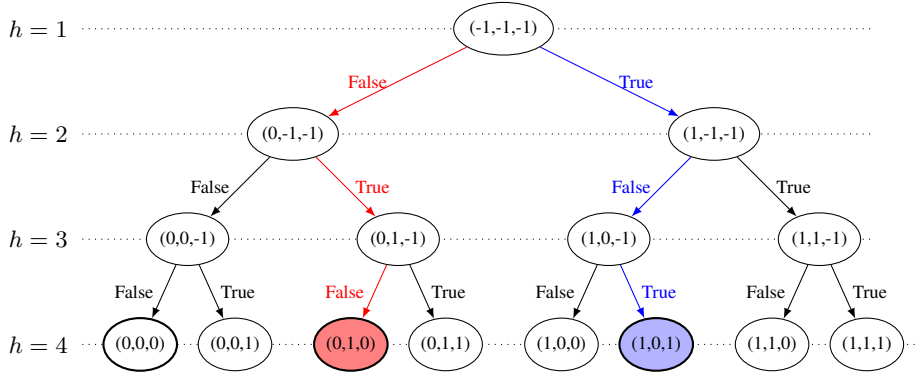

\begin{example}\label{exm:MDP}
To better illustrate the constructed MDP, we give an example based on Fig. \ref{f1}.
Suppose that we are given a \MaxSAT problem $\varphi$ as follows:
\begin{equation} \label{exmp1}
\varphi: (x_1 \vee \bar{x}_2 \vee x_3)\wedge (\bar{x}_1 \vee x_2 \vee \bar{x}_3).    
\end{equation}
This \MaxSAT formula is always satisfied except when $(x_1,x_2,x_3)=(0,1,0)$ or $(1,0,1)$, in these cases the ratio of clauses that can be satisfied is $\frac{1}{2}$. Here, state represent a tuple containing variables of our \MaxSAT problem, and the actions are either ``True" 
  (1) or ``False" (0). In this instance, we have 8 states as our terminal states (starting from $(1,1,1)$ in $h=4$). Starting from the initial state $(-1,-1,-1)$, we have two actions: ``True" or ``False", which can transit us to the second state denoted as $(1,-1,-1)$ or $(0,-1,-1)$. For example, if action ``True" is taken in state $(-1,-1,-1)$, we transit deterministically to state $(1,-1,-1)$.
 The final stage here is $h=4$. To illustrate partial satisfiability in the \MaxSAT problem instance $\varphi$, consider  the red path  starting from state $(-1,-1,-1)$ to state $(0,1,0)$, and the blue path from state $(-1,-1,-1)$ to state $(1,0,1)$. The total rewards of these two paths are $\frac{1}{2}$ (only one of the clauses is satisfied), while the total reward of all the other paths are $1$ (all of the clauses are satisfied).
\end{example}

\subsubsection{Design of PSP Feature and Weight Vectors $\phi'$ and $ \theta'$} \label{subsec:PSP-Design-Proof}
To guarantee that our MDP captures the attributes of the \TGLRL problem, we must ensure that PSP feature vector $\phi'$ is properly designed, allowing for the construction of well-defined greedy policy sets (Definition \ref{grd-act}).  

For any given state-action pair $ (s_h, a) \in \mathcal{S}_h \times \mathcal{A}$, let $ \phi' \in \mathbb{R}^{d'}$, where $d'=n$, be defined as follows: 
\begin{align}
\phi'(s_{h},a) =
    \begin{cases}
       \left[0, 0, \cdots, \underbrace{1}_{h\text{-th element}}, 0, \cdots, 0\right]   & \qquad \text{if } a = \text{True},\\
       \\
       \left[0, 0, \cdots, \underbrace{-1}_{h\text{-th element}}, 0, \cdots, 0\right]   & \qquad \text{if } a = \text{False},
     \end{cases}
\label{phi_eq}
\end{align}
where all the elements of $\phi'(s_h, a)$ are 0 except the $h$-th one, which is set to 1 (when $ a =$``True") or $-1$ (when $ a = $``False"). Recall that the action set $\mathcal{A} = \{0, 1\}$, where $0$ corresponds to action ``False", and $1$ corresponds to action ``True". For any $ s_h $, if $\pi_{\theta'}^{g}(s_h) = 1$, then the action $a = 1$ is selected as the result of the maximization:
\begin{equation}\label{eq: argmax - true}
\arg\max_{a \in \mathcal{A} } \langle \phi(s_h, a), \theta' \rangle = 1.
\end{equation}
In this case,  it suffices to set the $h$-th element of the vector $\theta'$  as an arbitrary positive number. In contrast, if $a = -1$ needs to be selected as the result of the above argmax, we can set the $h$-th element of the vector $\theta'$  as an arbitrary negative number. Without loss of generality, we assume that $\theta' \in \mathbb{R}^{d'}$ consists only of elements in $\{-1, 1\}$. Note that for each given $\theta'$, the greedy actions selected in all the states in $\gS_h$ are the same, depending on whether the $ h $-th entry of $ \theta' $ being 1 or -1.

\subsubsection{Design of partial realizability Feature and Weight Vectors $\phi$ and $ \theta$}\label{Ch3-Sec4-2: Design-partial realizability-Vectors}
As demonstrated in the complexity problem \GLRL and Definition \ref{def:partial-realizability}, the given partial realizability feature vector $\phi$ is utilized for the purpose of holding exact linear realizability under a given greedy policy set $\Pi^g$. In this section, we show how to design $\phi, \theta_h \in \mathbb{R}^{d}$ such that the action-value function $q^{\pi}_{h}(s_h,a_h)$  can be represented as a linear combination of $\phi$ and $ \theta_h$ for any $(s_h,a_h) \in \gS_{h} \times \gA$ under any greedy policy $\pi \in \Pi^g$ for all $h \in [H]$. Formally, we state the linear realizability results as follows: 

\begin{proposition}
\label{propos : realizability}
    Given a greedy policy set $\Pi^g$, for any state $s_h \in \mathcal{S}_h$, $a_h \in \mathcal{A}$, and $\pi \in \Pi^g$, there exist $\phi(s_h,a) \in \mathbb{R}^{d}$ and $\theta_{h} \in \mathbb{R}^{d}$  such that: 
    \begin{align} \label{lin-target}
        q^{\pi}_{h}(s_{h},a_{h}) = \langle \phi(s_{h},a_{h}), \theta_{h} \rangle, \qquad \forall h \in [H].
    \end{align}
\end{proposition}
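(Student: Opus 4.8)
The plan is to collapse the action-value function along the deterministic greedy rollout down to a single terminal reward, and then to show that this terminal reward, viewed as a function of the state-action pair, splits into a \emph{policy-independent} feature and a \emph{policy-dependent} weight. First I would fix an arbitrary greedy policy $\pi = \pi^g_{\theta'} \in \Pi^g$ and exploit the PSP construction of Section~\ref{subsec:PSP-Design-Proof}: since $\phi'(s_j,a)$ equals $\pm$ the $j$-th standard basis vector \emph{independently of $s_j$}, the greedy action at stage $j$ is a single value $b_j \in \{0,1\}$ (fixed by the sign of the $j$-th entry of $\theta'$) common to every state in $\mathcal{S}_j$. Because transitions are deterministic and all rewards vanish before stage $H$, taking action $a_h$ in $s_h = (x_1,\dots,x_{h-1},-1,\dots,-1)$ and then following $\pi$ reaches the unique terminal state $s_H = (x_1,\dots,x_{h-1},a_h,b_{h+1},\dots,b_n)$, so that with $m := |\mathcal{C}|$,
\[
q^{\pi}_h(s_h,a_h) = R(s_H) = \frac{|\mathcal{C}_{\text{true}}(s_H)|}{|\mathcal{C}|} = 1 - \frac{1}{m}\sum_{j=1}^m \mathbf{1}[c_j \text{ is falsified by } s_H].
\]

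The key step is to factorize each falsification indicator. I would write $\mathbf{1}[c_j \text{ falsified}] = \prod_{\ell \in c_j}\mathbf{1}[\ell \text{ false}]$ and split the literals of $c_j$ according to whether their variable index is $\le h$ (``decided'' by the state-action pair) or $> h$ (``future,'' fixed by the policy tail). Since these two literal sets are disjoint, the product factors as
\[
\mathbf{1}[c_j \text{ falsified by } s_H] = D_j(s_h,a_h)\cdot f_j^{\pi}, \qquad D_j(s_h,a_h) = \prod_{\ell \in c_j :\, \mathrm{var}(\ell)\le h}\mathbf{1}[\ell \text{ false under }(x_1,\dots,x_{h-1},a_h)],
\]
where $f_j^{\pi} = \prod_{\ell \in c_j :\, \mathrm{var}(\ell) > h}\mathbf{1}[\ell \text{ false under }(b_{h+1},\dots,b_n)]$ and empty products equal $1$. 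The crucial observation is that $D_j$ depends only on $(s_h,a_h)$ and not on $\pi$, whereas $f_j^{\pi}\in\{0,1\}$ depends only on the policy tail and is therefore \emph{constant} over all of $\mathcal{S}_h\times\mathcal{A}$ for fixed $\pi$ and $h$.

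This immediately yields the realization. I would take $d = m+1$ and set
\[
\phi(s_h,a_h) = \big(1,\,D_1(s_h,a_h),\,\dots,\,D_m(s_h,a_h)\big), \qquad \theta_h = \Big(1,\,-\tfrac{1}{m}f_1^{\pi},\,\dots,\,-\tfrac{1}{m}f_m^{\pi}\Big),
\]
so that $\langle \phi(s_h,a_h),\theta_h\rangle = 1 - \frac{1}{m}\sum_{j=1}^m f_j^{\pi}\,D_j(s_h,a_h) = q^{\pi}_h(s_h,a_h)$, which is exactly \eqref{lin-target}. Note that $\phi$ is a single fixed map shared by every $\pi \in \Pi^g$ (only $\theta_h$ absorbs the policy, through the scalars $f_j^{\pi}$), precisely as Definition~\ref{def:partial-realizability} requires, and $d = m+1 = O(n^3)$ is polynomial in $n$, consistent with the attributes of \TGLRL.

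The main obstacle is the factorization step: one has to recognize that the \emph{entire} dependence of $q^{\pi}_h$ on the policy collapses, clause by clause, into the scalar ``future-falsification'' factors $f_j^{\pi}$, which are constant across a stage and hence admissible as weight entries, leaving a policy-free feature $D_j$ that records only how the already-decided variables falsify $c_j$. Making this rigorous requires the disjointness of the decided/future literal sets (so the indicator product factors cleanly), the empty-product conventions for clauses that are entirely decided or entirely future, and the bias coordinate that converts the leading ``$1-$'' into a linear form; once these are in place, the verification of \eqref{lin-target} reduces to the one-line inner-product computation above.
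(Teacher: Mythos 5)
Your proposal is correct, and it reaches \eqref{lin-target} by a genuinely different route than the paper. The paper decomposes the count of \emph{satisfied} clauses additively: its feature $\phi(s_h,a)=\frac{1}{|\mathcal{C}|}\,[b_h,\,Y_h]$ records the number $b_h$ of clauses already satisfied together with the multiset $Y_h$ of \emph{simplified} undecided clauses, indexed over all $\Theta(n^3)$ possible 1-, 2-, and 3-literal clauses in $\mathcal{C}_{\mathrm{total}}$; its weight $\theta_h=[1,M_h]^{\intercal}$ flags which of those possible clauses the greedy look-ahead tail satisfies; and the identity $q^{\pi}_h=\frac{1}{|\mathcal{C}|}\bigl(b_h+\langle Y_h,M_h\rangle\bigr)$ is then verified by backward induction on $h$. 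You instead factor each \emph{falsification} indicator multiplicatively, clause by clause, into a decided part $D_j$ (which goes into the feature) and a future part $f_j^{\pi}$ (which goes into the weight). Both arguments rest on the same two structural facts---deterministic transitions with terminal-only reward, and greedy actions that are constant across each stage $\mathcal{S}_j$---but your factorization eliminates the induction, avoids all bookkeeping of how clauses shrink as variables get assigned, and needs only $|\mathcal{C}|+1$ coordinates indexed by the instance's own clauses rather than by every clause that could exist. What the paper's bulkier construction buys in exchange is that its dimension $d=\Theta(n^3)$ depends only on $n$, so the \GLRL attributes $H=\Theta(d^{\frac{1}{3}})$ and $|\mathcal{S}|=\exp(\Theta(d^{\frac{1}{3}}))$ hold automatically; your $d=|\mathcal{C}|+1$ satisfies them only when $|\mathcal{C}|=\Theta(n^3)$, so for sparse instances (e.g., the bounded-occurrence instances of \DBTMaxSAT used in the softmax reduction, where $|\mathcal{C}|=O(n)$) you would need to pad $\phi$ with zero coordinates---a trivial but necessary remark. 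Finally, the paper's $Y_h/M_h$ machinery is reused verbatim in the softmax proof (Appendix~\ref{Appendix:Softmax}, where $\theta_h$ becomes a mixture of $[1,M_h^{(\tau)}]^{\intercal}$ over trajectories); your construction would extend there just as cleanly, since replacing $f_j^{\pi}$ by the probability that all future literals of $c_j$ are falsified factors across stages (softmax action distributions being stage-dependent but state-independent), which is arguably an even simpler path to Proposition~\ref{propos : realizability-sm}.
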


\begin{proof}
 
We prove the above proposition in three steps. First, we design $\phi(s_h, a)$ to track the number of satisfied clauses in \DTMaxSAT up to stage~$h$, while also keeping a record of the remaining unsatisfied clauses after each variable assignment. Next, we construct the partial realizability weight vector $\theta_h$ to track the policy $\pi$ from stage $h+1$ onward. We then show by induction that the linear realization described in Proposition~\ref{propos : realizability} holds. See Appendix~\ref{Proof:Realizability} for the details. 
\end{proof}

Before concluding this section, we present two remarks: one on the polynomial-time construction of the MDP parameters in $M_{\varphi}$, and another on the conditions required for establishing NP-hardness of partial $q^{\pi}$-realizability under a general policy set $\Pi$.

\begin{remark}[Polynomial-Time Construction]
The computationally intensive step in our reduction involves designing the feature vectors $\phi$ and $\theta$, which can be done in $O(n^3)$ time due to their dependence on the size of $\gC_{\text{total}}$. Other components such as $\phi'$, $\theta'$, and reward vectors require only $O(n)$ time. Thus, the entire reduction remains within polynomial time, satisfying the complexity-theoretic requirements of the NP-hardness proof. A step-by-step example is provided in Appendix~\ref{Ch7:Appendix}.
\end{remark}

\begin{remark}[General Conditions for NP-hardness under Partial $q^{\pi}$-realizability]
Our proof framework naturally extends to other policy classes. For any NP-hard decision problem $\mathfrak{P}$ with input size $|\mathfrak{P}|$ and any policy set $\Pi \subset \gA^\gS$ of size $\Omega(|\gS|)$, learning an $\epsilon$-optimal policy under partial $q^\pi$-realizability is NP-hard, provided that (i) each policy $\pi \in \Pi$ can be parameterized in polynomial time in $|\mathfrak{P}|$, (ii) the feature vectors $\phi$ and $\theta$ are constructible in polynomial time, and (iii) solving $\mathfrak{P}$ reduces to finding an $\epsilon$-optimal policy with respect to $\Pi$. These conditions highlight the central role of efficient policy representation in the reduction.    
\end{remark}

\section{Conclusion and Future Work}\label{Ch5:Conclusion}
In this paper, we introduced a novel linear realizability setting in reinforcement learning (RL), termed \emph{partial \(q^{\pi}\)-realizability}, which bridges the gap between the \(q^*\)- and \(q^{\pi}\)-realizability paradigms. The central contribution of this work is the establishment of computational hardness results for RL under partial $q^{\pi}$-realizability for both greedy (i.e., argmax) and softmax policy sets. These results extend prior work on computational hardness in $q^*$- and $v^*$-realizable settings \cite{kane2023exponential}, showing that enlarging the policy class beyond the optimal policy $\pi^*$ does not eliminate the fundamental computational challenges. 

A limitation of our current setting is that the feature vectors used for policy parameterization in constructing the policy class $\Pi$ (i.e., $\phi'$ and $\theta'$) differ from the realizability parameters (i.e., $\phi$ and $\theta$), which are central to the learning process. Extending our proof techniques to accommodate partial $q^{\pi}$-realizability with a unified feature vector (i.e., $\phi' = \phi$) remains a significant challenge and is left for future work.  It is also worth noting that our NP-hardness results for RL under partial $q^{\pi}$-realizability should not be seen as purely negative. While they reflect worst-case complexity, they also highlight the need for efficient algorithms under extra assumptions, such as structural constraints on policies or features. For instance, combining the agnostic RL objective \citep{jia2023agnostic} with linear function approximation captures practical settings where $\epsilon$-optimality is defined relative to a policy class. This connection may offer useful directions for developing sample- or computation-efficient methods. 

\section*{Acknowledgments}
The authors thank Gellért Weisz and Csaba Szepesvári for their helpful discussions throughout this research. Xiaoqi Tan acknowledges support from Alberta Machine Intelligence Institute (Amii), Alberta Major Innovation Fund, and NSERC Discovery Grant RGPIN-2022-03646.

\bibliographystyle{alpha}
\bibliography{reference}

\newpage
\appendix
\section{Extended Discussion of Related Work}\label{Ch2-Sec4: Previous-Results}

Under the linear function approximation framework, there are two well-defined problem settings in the literature: the $q^*$-realizability and $q^{\pi}$-realizability settings. Their definitions are provided below.
\begin{definition}[\textbf{$q^{*}$-Realizable MDP}] 
\label{q*-rel}
An MDP $M$ is called $q^{*}$-realizable if there exists $\theta_{h} \in  \mathbb{R}^d$ for any $h \in [H]$ such that for optimal policy $\pi^*$,  $q^{*}_{h}(s,a) = \langle \phi(s,a) , \theta_h \rangle$ holds for all $ s \in \mathcal{S}$ and $a\in \mathcal{A}$. 
\end{definition}
\begin{definition}
[\textbf{$q^{\pi}$-Realizable MDP}]
\label{qpi-rel}
An MDP $M$ is called $q^{\pi}$-realizable if there exists $\theta_{h} \in  \mathbb{R}^d$ for any $h \in [H]$ such that $q^{\pi}_{h}(s,a) = \langle \phi(s,a) , \theta_h \rangle$ holds for all $\pi, s \in \mathcal{S}$ and $ a\in \mathcal{A}$. 
\end{definition} 

For clarity, in Table \ref{table:assumptions_list} we summarize the common assumptions used in both the $q^{\pi}$-realizability and $q^*$-realizability settings.

\begin{center}
\scalebox{0.85}{
\begin{tabular}{|c|c|}
 \hline
   \textbf{Assumptions} & \textbf{Description} \\ 
 \hline\hline
   Exact realizability \circlearound{1} & $q_h(s,a) = \langle \phi(s,a), \theta_h \rangle + \beta$, where $\beta = 0$ \\
 \hline
   Approximate realizability \circlearound{2}  & $q_h(s,a) = \langle \phi(s,a), \theta_h \rangle + \beta$, where $\beta \neq 0$ \\
 \hline
   Access model \circlearound{3} & Generative model ($\mathcal{M}_g$) or online access ($\mathcal{M}_o$) \\
 \hline
   Minimum suboptimality gap \circlearound{4} & $\zeta_h(s,a) = V^*_h - q^*_h(s,a) \geq \delta$ \\
 \hline
   Low Variance Condition \circlearound{5} & $E_{s \sim \mu}[|V^\pi(s)-V^*(s)|^2] \leq n_0(E_{s \sim \mu}[|V^\pi(s)-V^*(s)|])^2$ \\
 \hline
   DSEC Oracle Access \circlearound{6} & Tests if a predictor generalizes across distributions over all $(s,a)$ pairs \\
 \hline
   Action set size \circlearound{7} & For exponential variation ($\exp(d)$) \\
 \hline
   Transition Dynamics \circlearound{8} & For deterministic variation ($\mathcal{P}_{det}$) \\
 \hline
   Bounded feature vector \circlearound{9} & $\|\phi(s,a)\|_2 \leq 1$ for all $(s,a) \in (\mathcal{S} \times \mathcal{A})$ \\
 \hline
   Estimate Oracle Access \circlearound{10} & Computes approximate value function for any state-action pair \\
 \hline
   Spanning feature vectors \circlearound{11} & Matrix $\Phi \in \mathbb{R}^{k \times d}$ spans $\mathbb{R}^d$ with unique rows \\
 \hline
   Feature vector hypercontractivity \circlearound{12} & Anti-concentration of distribution over state-action pairs \\
 \hline
\end{tabular}
}

\captionof{table}{List of common assumptions used in related work regarding the $q^*$- and $q^{\pi}$-realizability settings.  Note that here $\beta$ is a linear approximation error; $\gM_g $ and $\gM_{o}$ represent the generative model and online access model, respectively. In addition, $\zeta_{h}(s,a)$ denotes the suboptimality gap for each $(s,a) \in \gS \times \gA$ at stage $h$.
Each of the assumptions comes with a number in front of it, and we use them to indicate the main problem setting of the related work provided in the following tables.}
\label{table:assumptions_list}
\end{center}
 
We begin by reviewing results for the $q^*$-realizability setting. Research in this category is generally divided between hardness analyses and upper bound guarantees. Exponential lower bounds under specific conditions are reported by \cite{weisz2021exponential}, \cite{wang2021exponential}, and \cite{du2020good}, as summarized in Table \ref{TB1}. These results rely on assumptions about the structural properties of the MDP (e.g., number of actions, variance constraints, and feature vector structure) and the type of agent-environment interaction (e.g., online, local, or random access). Despite some variations—such as low variance conditions or a minimum suboptimality gap under the online setting—the exponential lower bound remains robust. Nonetheless, polynomial-time algorithms have been proposed under additional assumptions, as detailed by \cite{wang2021exponential}, \cite{wen2016efficient}, \cite{du2020good}, \cite{du2019provably}, and \cite{du2020agnostic}.

There is also a growing body of work focused on the $q^{\pi}$-realizability setting. For example, \cite{du2020good} establishes an exponential lower bound in the horizon $H$. However, with additional assumptions—such as bounded feature vectors, access to generative models, estimation oracles, or informative state-action pairs (core set access) as in \cite{lattimore2020learning}, \cite{yin2022efficientlocalplanninglinear}, and \cite{weisz2022confident}—polynomial-time algorithms have been developed. These assumptions are summarized in Table \ref{TB2}.

\begin{center}
\scalebox{0.80}{
\begin{tabular}{|c|c|c|c|}
 \hline
   \textbf{Assumptions} & \textbf{Lower bound} & \textbf{Upper bound} & \textbf{Related works} \\ 
 \hline \hline
   \circlearound{1}, \circlearound{7} ($\exp(d)$) and \circlearound{3} ($\mathcal{M}_g$) & $\min(e^{\Omega(d)},\Omega(2^H))$ & LSVI with $\tilde{O}\left(\frac{H^5 d^{H+1}}{\delta^2}\right)$ & \cite{weisz2021exponential} \\
 \hline
   \circlearound{4}, \circlearound{5}, \circlearound{3} ($\mathcal{M}_o$), \circlearound{12} & $2^{c \cdot \min(d,H)}$, $c$ constant & poly$\left(\frac{1}{\epsilon}\right)$ queries for $\epsilon$-optimal policy & \cite{wang2021exponential} \\
 \hline
   \circlearound{8} ($\mathcal{P}_{det}$) & - & poly(Eluder Dimension) or poly($\text{dim}_e[Q]$) & \cite{wen2016efficient} \\
 \hline
   \circlearound{4}, \circlearound{8} ($\mathcal{P}_{det}$) & - & poly($\text{dim}_e$) with error $\delta = O\left(\frac{\text{gap}}{\text{dim}_e}\right)$ & \cite{du2020agnostic} \\  
 \hline
   \circlearound{4}, \circlearound{3} ($\mathcal{M}_g$), \circlearound{2} & Exponential & poly($d,H,\frac{1}{\rho},\log\frac{1}{\delta}$), $\rho$: gap, $\delta$: failure prob. & \cite{du2020good} \\ 
 \hline
   \circlearound{4}, \circlearound{5}, \circlearound{6} & - & poly$\left(\frac{1}{\epsilon}\right)$ & \cite{du2019provably} \\
 \hline
\end{tabular}
}
\captionof{table}{Sample complexity results for the $q^*$-realizability setting.}
\label{TB1}
\end{center}

\begin{center}
\scalebox{0.90}{
\begin{tabular}{|c|c|c|c|}
 \hline
   \textbf{Assumptions} & \textbf{Lower bound} & \textbf{Upper bound} & \textbf{Related works} \\ 
 \hline \hline
   \circlearound{2}, \circlearound{3} ($\mathcal{M}_g$)  &  Exponential in $H$ for $\epsilon=\Omega(\sqrt{\frac{H}{d}})$ & - & \cite{du2020good} \\ 
 \hline
   \circlearound{2}, \circlearound{3} ($\mathcal{M}_g$), \circlearound{12} & - & $\frac{d}{\epsilon^2(1-\gamma)^4}$ with $\epsilon$ accuracy & \cite{lattimore2020learning} \\
 \hline
   \circlearound{2}, \circlearound{3} ($\mathcal{M}_g$), \circlearound{10} & - & $\Tilde{O}\left(\frac{d}{\epsilon^2(1-\gamma)^4}\right)$ with $\epsilon$ accuracy & \cite{weisz2022confident} \\
 \hline
\end{tabular}
}
\captionof{table}{The lower and upper bounds in the $q^{\pi}$-realizability setting under various assumptions. Under the generative model, the statistical hardness is exponential in horizon with approximation error \cite{du2020good}. Also, in the upper bounds, we can see the dependency on $d$ and $\frac{1}{\epsilon^2}$, where $d$ is the size of the feature vector and $\epsilon$ is an approximation error.
} 
\label{TB2}
\end{center}

\begin{center}
\scalebox{0.85}{%
\begin{tabular}{|l|l|l|}
\hline 
\multicolumn{1}{|c|}{\bf Main Contribution}  
&\multicolumn{1}{c|}{\bf Lower Bound}
&\multicolumn{1}{c|}{\bf Related Works} \\
\hline \hline
\makecell{Unique-3-SAT $<_p$ Linear-3-RL} 
& \makecell{Quasi-polynomial in size of feature vector, \\ $d^{O\left(\frac{\log(d)}{\log \log (d)}\right)}$} 
& \cite{kane2022computationalstatistical} \\
\hline
\makecell[l]{($\epsilon$,b)-Gap-3-SAT $<_p$ Linear-3-RL}            
& \makecell[l]{Exponential, \\ $\exp\left(\tilde{O}\left(\min\left(d^{\frac{1}{4}}, H^{\frac{1}{4}}\right)\right)\right)$} 
& \cite{kane2023exponential} \\
\hline
\end{tabular}%
}
\captionof{table}{Computational hardness results under the $q^{*},v^{*}$-realizability setting. Note that $A <_p B$ implies a polynomial-time reduction from problem $A$ to problem $B$.} \label{table:hardness}
\end{center}

Despite these developments, efficient computation remains a fundamental challenge. Recent studies by \cite{kane2022computationalstatistical} and \cite{kane2023exponential} emphasize the gap between statistical and computational efficiency under optimal value function realizability, suggesting that NP-hardness may still arise even when sample efficiency is achievable.

Under the $q^*, v^*$-realizability assumption, two important studies investigate computational hardness, summarized in Table \ref{table:hardness}. In \cite{kane2022computationalstatistical}, the authors establish a quasi-polynomial lower bound on the feature dimension $d$, specifically $d^{O(\frac{\log(d)}{\log \log(d)})}$, under the randomized exponential time hypothesis (rETH). However, this bound is not tight, as the reduction technique used may leak information about the optimal action after a quasi-polynomial number of interactions, resulting in an imprecise lower bound. In contrast, \cite{kane2023exponential} presents a stronger exponential lower bound of $\exp(\tilde{O}(\min(d^{1/4}, H^{1/4})))$, where $H$ denotes the horizon and $d$ is the feature dimension. This result demonstrates that the computational hardness of learning near-optimal policies under $q^*$- and $v^*$-realizability is greater than previously established. By reducing from a variant of 3-SAT, the authors derive an exponential hardness result that provides a tighter complexity bound.

Both \cite{kane2022computationalstatistical} and \cite{kane2023exponential} employ the \emph{Randomized Exponential Time Hypothesis} (rETH) to establish computational lower bounds within the $q^*$, $v^*$-realizability framework. The core idea of their proofs is to construct an MDP instance from a given Unique-3-SAT instance (a variant of 3-SAT with a unique satisfying assignment), while preserving the structural features relevant to their RL setting. We now describe the general instance design framework used in \cite{kane2022computationalstatistical}, as variations of this approach are useful for our reduction in the partial $q^{\pi}$-realizability setting in Section~\ref{Ch3:ProblemSetting-Proof}. According to \cite{kane2022computationalstatistical}, when designing an MDP instance $M_{\varphi}$ from a Unique-3-SAT formula $\varphi$, each MDP state represents a candidate assignment, encoded as a tuple of variable values. At each state, the agent is presented with a set of unsatisfied clauses and can choose to flip variables—i.e., to assign them ``True'' or ``False.'' The agent’s objective is to take actions that reduce the Hamming distance to the unique satisfying assignment. The reward structure is randomized in a way that encourages minimizing this distance, thereby aligning the agent’s learning objective with the solution to the original SAT instance. The reward is given at the last stage of the MDP or when the satisfying assignment is found, and it relies on the Bernoulli distribution. In this work, due to the structure of the learned policy, a deterministic reduction is also applied to argue the hardness result.

\section{An Example of the MDP Instance $ M_{\varphi} $}\label{Ch7:Appendix}
In this section, we illustrate the process of designing the MDP instance $\mathcal{M}_{\varphi}$ through an example based on the \DTMaxSAT instance presented in Example~\ref{exm:MDP}. That example demonstrates how the components of the MDP are constructed from a given \DTMaxSAT instance. However, function approximation has not yet been integrated into the framework. As outlined in Sections~\ref{subsec:PSP-Design-Proof} and~\ref{Ch3-Sec4-2: Design-partial realizability-Vectors}, the design principles for the PSP and partial realizability vectors have already been established. To improve clarity, we now explicitly construct these parameters for the specific case described in Example~\ref{exm:MDP}.

Recall that the complexity problem is given by $\varphi = (x_1 \vee \bar{x}_2 \vee x_3) \wedge (\bar{x}_1 \vee x_2 \vee \bar{x}_3)$, which involves three variables and two clauses. Figure~\ref{f1} depicts the transformation of this \DTMaxSAT instance into an MDP instance. However, the attributes required to define our \TGLRL problem instance are not yet fully specified, as the realizability parameters and a well-defined policy set remain to be constructed. As the first step, we now design a greedy policy set for this example using the feature representation $\phi'$.

\textbf{PSP Parameters Design.}  
As discussed in Section~\ref{subsec:PSP-Design-Proof}, we design the matrix $\Theta \in \mathbb{R}^{d' \times 2^{d'}}$, consisting of $2^{d'}$ vectors, where each $\theta \in \mathbb{R}^{d'}$ represents a greedy policy. In our construction, we set $d' = n$, and specifically, $d' = 3$. Under this setup, the greedy set ensures that the states $\mathcal{S}_h$ at each stage $h$ share the same greedy action, while allowing greedy actions to be selected independently across different stages. In this example, $|\Pi^g| = 8$, meaning that there are eight PSP weight vectors $\theta' \in \mathbb{R}^3$, each corresponding to a unique combination of elements from $\{-1, 1\}^3$.

In the MDP structure depicted in Figure~\ref{f1}, for the first state, we have:
\begin{align*}
\phi'\big((-1, -1, -1), \text{``True"}\big) & = [1, 0, 0], \\
\phi'\big((-1, -1, -1), \text{``False"}\big) & = [-1, 0, 0].   
\end{align*}

Recall that we always update the $h$-th element of $\phi'(s_h, a_h)$ based on the action $a_h$. Therefore, at each stage $h$, the value of $\phi'(s_h, a_h)$ remains consistent for all $(s_h, a_h) \in \mathcal{S}_h \times \mathcal{A}$, since it depends solely on the stage.

\textbf{Partial Realizability Parameters Design.}  
We begin by demonstrating the representation of $\phi(s, a) \in \mathbb{R}^d$ for all state-action pairs in the MDP shown in Figure~\ref{f1}, and then illustrate how $\theta_h$ is assigned for any $h \in [H]$ to verify Proposition~\ref{propos : realizability}. As stated in the proof of Proposition~\ref{propos : realizability}, we have $d = \ell_1 + \ell_2 + \ell_3$, which implies:
\[
d = {2n \choose 1} + {2n \choose 2} + {2n \choose 3} - 2n^2 + n + 1.
\]
For $n = 3$, this yields $d = 27$. To facilitate analysis, consider a specific state at the stage $h = 2$, namely $(1, -1, -1)$, which lies along the blue path in Figure~\ref{f1}. When referring to the values of $\phi$, we focus on $Y_h$ and $b_h$, omitting the constant $\frac{1}{|\mathcal{C}|}$ for simplicity. In this state, the first variable $x_1$ is set to $1$, while the remaining variables ($x_2$ and $x_3$) remain undecided. Assume without loss of generality that the greedy policy $\pi \in \Pi^g$ selects the action ``True'' for all states. From the previous section, this corresponds to $\theta^{\intercal} = [1, 1, 1]$.

We now examine how the feature vector $\phi((1, -1, -1), \text{``False"})$ is determined. In this example, $b_2$ represents the number of satisfied clauses after setting $x_1 = 1$. Since the first clause of $\varphi$ is satisfied, we have $b_2 = 1$. Following the structure in the proof of Proposition~\ref{propos : realizability}, we compute $Y_2$. After taking the action ``False'' in state $(1, -1, -1)$, $x_2$ is set to $0$ in the formula $\varphi$. Consequently, the remaining undecided clause is $\mathcal{U}_2 = \{\bar{x}_3\}$. The corresponding entry for $\bar{x}_3$ in $Y_2$ is set to $1$ since it appears once, while all other entries in $Y_2$ are set to $0$.

Next, we specify the value of $\theta_h$ to ensure partial linear realizability. Given that $x_1$ and $x_2$ have been determined, the component of $\theta_h$ corresponding to $b_2$ is set to $1$. For the component corresponding to the clause $\bar{x}_3$, we have $f_2(\theta') = 0$, which maintains consistency with policy $\pi$. Thus, the dot product $\phi((1, -1, -1), \text{``False"}) \cdot \theta_h^{\intercal}$ evaluates to:
\[
\phi\big((1, -1, -1), \text{``False"} \big) \cdot \theta_h^{\intercal} = \frac{1}{|\mathcal{C}|} \big( b_2 \cdot 1 + 1 \cdot 0 \big) = \frac{1}{2}.
\]
This matches the value of $q^{\pi}((1, -1, -1), \text{``False"})$, thereby validating the realizability condition stated in Proposition~\ref{propos : realizability}.

\section{Remaining Proof of Proposition \ref{propos : realizability}} \label{Proof:Realizability}
Given a \DTMaxSAT instance $\varphi$ with $n$ variables, let $\mathcal{C}_{\mathrm{total}}$ denote the set of all \textit{valid clauses} that can be formed using the given \DTMaxSAT variables, excluding trivial clauses—i.e., those of the form $(x_i \vee \bar{x}_i)$ or $(x_i \vee \bar{x}_i \vee x_j)$, which are always satisfied (evaluating to 1 for any assignment). In particular, we assume w.l.o.g. that the clauses in $ \mathcal{C}_{\mathrm{total}} $ are organized in the following order:
\begin{align*}
    \mathcal{C}_{\mathrm{total}} = \left\{ \underbrace{c^{(1)}_1, \cdots, c^{(1)}_{\ell_1}}_{\mathcal{C}_{\mathrm{total}}^{(1)}} , \underbrace{c^{(2)}_1, \cdots, c^{(2)}_{\ell_2}}_{\mathcal{C}_{\mathrm{total}}^{(2)}}, \underbrace{c^{(3)}_1, \cdots, c^{(3)}_{\ell_3}}_{\mathcal{C}_{\mathrm{total}}^{(3)}} \right\},
\end{align*}
where $c^{(j)}_{i}$ denotes the $i$-th $j$-CNF clause (each clause has exactly $j$ variables) and $\ell_j$ denotes the total number of valid $j$-CNF clauses. Note that $\ell_1 = {2n \choose 1} $,  $ \ell_2 = {2n \choose 2} - n $, and $ \ell_3 =  {2n \choose 3} - 2n^2+2n$.

\textbf{Partial Realizability Feature Vector $\phi$.}
Recall that for any state $ s_h = (x_1, \cdots, x_{h-1}, -1, \cdots, -1) $ at the beginning of stage $h \in [H]$ (prior to taking action $a_h$), the first $h-1$ variables have been assigned values of either 0 or 1, based on actions taken before stage $h$. These are referred to as the \textit{assigned variables} $\mathbf{x}_{1:h-1} = (x_1, \cdots, x_{h-1})$, while the remaining $n+1 - h$ elements, denoted as $\mathbf{x}_{h:n} = (x_h, \cdots, x_n)$, remain unassigned, hence termed \textit{unassigned variables}. For any state-action pair $s_h \in \mathcal{S}_h$ and $a_h \in \mathcal{A}$, let $\mathcal{U}_h$ represent the list of undecided clauses in $\varphi$ after excluding the first $h$ assigned variables (i.e., $x_1$ through $x_h$). Therefore, $\mathcal{U}_0$ includes all clauses initially given by the instance $\varphi$, where $\mathcal{U}_0 = \mathcal{C} \subseteq \mathcal{C}_{\mathrm{total}}$. Upon transitioning to a new state $s_{h+1} \in \gS_{h+1}$, the set $\gU_h$ is updated, leading to two possible events in $\gU_h$. First, since the variable $x_h$ is determined at this stage, certain undecided clauses may become satisfied and are consequently removed from $\gU_{h+1}$. Second, some clauses may be modified, reducing in size based on the assigned value of $x_h$. For better illustration of $\gU_h$'s construction, let us look at the following example: 
\begin{align*}
    \varphi: (\bar{x}_1 \vee \bar{x}_2) \wedge (x_1 \vee \bar{x}_4 \vee x_5) \wedge (x_2 \vee \bar{x}_4 \vee x_5) \wedge (x_3 \vee \bar{x}_6 \vee x_7).
\end{align*}
We have $\mathcal{U}_0 = \{(\bar{x}_1 \vee \bar{x}_2), (x_1 \vee \bar{x}_4 \vee x_5), (x_2 \vee \bar{x}_4 \vee x_5), (x_3 \vee \bar{x}_6 \vee x_7)\}$. At the end of stage $h = 2$, suppose $x_1 = 0$ and $x_2 = 0$. In this case, the first clause $(\bar{x}_1 \vee \bar{x}_2)$ can be decided, while the remaining three clauses stay undecided. However, the undecided clauses $(x_1 \vee \bar{x}_4 \vee x_5)$ and $(x_2 \vee \bar{x}_4 \vee x_5)$ shrink and simplify to $(\bar{x}_4 \vee x_5)$. Thus, at the end of stage $h = 2$, the list of (simplified) undecided clauses in $\varphi$ is $\mathcal{U}_2 = \{(\bar{x}_4 \vee x_5), (\bar{x}_4 \vee x_5), (x_3 \vee \bar{x}_6 \vee x_7)\}$, where the clause $ (\bar{x}_4 \vee x_5) $ repeats twice. Using the above concepts, we can now explain how the partial realizability feature vector $\phi$ is constructed. For any $ h \in [H] $, let $Y_h$ be a vector of natural numbers, defined as follows:
\begin{align*}
Y_h = \left( y^{(1)}_{h,1}, \dots, y^{(1)}_{h,\ell_1}, y^{(2)}_{h,1}, \dots, y^{(2)}_{h,\ell_2}, y^{(3)}_{h,1}, \dots, y^{(3)}_{h,\ell_3} \right),
\end{align*}
where $ y^{(j)}_{h,i} $ denotes the number of times $c^{(j)}_{i}$ repeats in $ \mathcal{U}_h $. It should be noted that $y^{(j)}_{h,i} = 0 $ occurs if the corresponding clause $c^{(j)}_{i}$ does not appear in the \DTMaxSAT instance $ \varphi $ or has been decided (either True or False) at the end of stage $ h $. For any $s_h \in S_h$ and $a \in A$,  we define the partial realizability feature vector $\phi$ as follows:
\begin{align}
\phi(s_h,a) = \frac{1}{|\mathcal{C}|} \cdot \big[b_h, Y_h\big], \qquad \forall h\in [H],
\label{phi'}
\end{align}
where $ b_h = |\mathcal{C}_{\text{true}}(s_h)| $, denoting the number of satisfied clauses in $ \varphi $ at the end of stage $ h $ (i.e., after assigning variable $x_h$).
Based on the definition of $ \mathcal{U}_h $, we have $ b_h \leq |\mathcal{C}| - |\mathcal{U}_h| $, indicating that the number of satisfied clauses is always upper bounded by the total number of clauses that have been decided at the end of stage $ h $. We further construct a partially realizable weight vector $\theta_h$ such that the realizability condition stated in Proposition~\ref{propos : realizability} is satisfied.  

\textbf{Partial Realizability Weight Vector $\theta_h$.}
We now demonstrate the existence of $ \theta_h$ such that the linear realizability condition $ q_h^{\pi}(s_h, a_h) = \langle \phi(s_{h}, a_{h}), \theta_{h} \rangle $ holds. Let \( f_h: \mathbb{R}^{d'} \to \{0,1\} \) be defined as
\begin{align} \label{opt:thre}
f_h(\theta') =
    \begin{cases}
        0 & \text{if } \theta'_{h} \le 0, \\
        1 & \text{otherwise}, \\ 
    \end{cases}
\end{align}
where $\theta'_{h}$ denotes the $h$-th element of $\theta'$.
For any state $s_h$ with unassigned elements (i.e., any state with at least one element being -1), we define the virtual \textit{look-ahead} state $ \hat{s}_h $ as follows:
\begin{align}
    \hat{s}_h = (x_1, \cdots, x_{h-1}, \hat{x}_h, \cdots, \hat{x}_n),  \label{Eq:Virtual-Lookahead-State} 
\end{align} 
where $ \hat{x}_j = f_j(\theta') $ holds for all $ j = h, \dots, n $. Thus, the look-ahead state $\hat{s}_h$ mirrors the current state $s_h$ for the first $h-1$ elements and matches the final state $s_H$ (i.e., the terminal state at stage $ H $ by following policy $\pi$ in state $s_{h+1}$ onwards) for the last $H - h$ elements (hence, the term ``look-ahead"). Given $ s_h \in \mathcal{S}_h $ and $ a_h \in \mathcal{A}$, we know that the next state is $ s_{h+1} = (x_1, \cdots, x_{h-1}, a_h, -1, \cdots, -1)$ and the first $ h $ variables (i.e., $ \mathbf{x}_{1:h} = (x_1, \cdots, x_h $)) will be assigned. 

Let $ M_h $ be defined as a vector of 0's and 1's as follows:
\begin{align*}
    M_h = \left(m^{(1)}_{h,1}, \cdots, m^{(1)}_{h,\ell_1}, m^{(2)}_{h,1}, \cdots, m^{(2)}_{h,\ell_2}, m^{(3)}_{h,1}, \cdots, m^{(3)}_{h,\ell_3} \right),
\end{align*}
where $ m_{h,i}^{(j)} = 0 $ indicates that the clause $ c_i^{(j)}$ consists of at least one assigned variable from $ \mathbf{x}_{1:h} $. For any clause $c_i^{(j)} \in \mathcal{C}_{\mathrm{total}}^{(j)}$ that depends solely on the unassigned variables $\mathbf{x}_{h+1:n} = (x_{h+1}, \cdots, x_n)$ (i.e. the clause is constructed only based on unassigned variables), let $m_{h,i}^{(j)}$ represent its satisfiability result, assuming that $\mathbf{x}_{h+1:n}$ follows the values of the look-ahead state $\hat{s}_h$ (i.e., $x_j = \hat{x}_j$ for all $j = h+1, \cdots, H$). Thus, $m_{h,i}^{(j)} = 0$ may occur in any of the following three scenarios: (i) $c_i^{(j)}$ depends only on $\mathbf{x}_{1:h}$ and can therefore be decided, (ii) $c_i^{(j)}$ is a mixture of $\mathbf{x}_{1:h}$ and $\mathbf{x}_{h+1:n}$, or (iii) $c_i^{(j)}$ depends solely on $\mathbf{x}_{h+1:n}$, with a satisfiability result of False based on the look-ahead state. We emphasize that all non-zero elements in $ M_h $ (i.e., entries of 1’s) are independent of $ s_h $ and $ a_h $ and are determined solely by $ \{f_j(\theta')\}_{\forall j = h+1, \cdots, n}$. Based on the $ M_h $ vector,  let $ \theta_h $ be defined as follows:
\begin{align}
    \theta^{\intercal}_{h} = [1, M_h], \qquad \forall h\in [H].
\label{Eq-Theta-Main-Format}
\end{align}

Given the proposed $\phi$ and $\theta_h$,  Proposition \ref{propos : realizability} will follow if  $ q^{\pi}_{h}(s_{h},a_{h}) = \frac{1}{|\mathcal{C}|}\cdot (b_h + \langle Y_h, M_h \rangle)$ holds for all $s_h \in \mathcal{S}_h$, $a_h \in \mathcal{A}$, and $\pi \in \Pi^g$.

\textbf{Realizability Validation.}
Here, we want to verify the linear realizability of our designed MDP that satisfies all the conditions of \TGLRL. 
Given $\phi(s_h,a) = \frac{1}{|\mathcal{C}|} \cdot \big[b_h, Y_h\big]$ as demonstrated in \Eqref{phi'}, we will prove by induction that there exists $ \theta^{\intercal}_{h} = [1, M_h]$ as described in  \Eqref{Eq-Theta-Main-Format} such that the realizability condition in Proposition \ref{propos : realizability} holds for any $h \in [H]$. 

\textit{Base case 1}: We first show that realizability holds for any state $s_h \in \gS_h$ at stage $h = H-1$. Consider a specific state $s_{H-1}$, where the number of satisfied clauses after taking action $a_{H-1}$ is denoted by $b_{H-1}$. If we take any action $a \in \gA$ in $s_{H-1}$, the process transitions to the terminal state $s_H$, yielding a reward of $\frac{|\mathcal{C}_{\text{true}}(s_H)|}{|\mathcal{C}|}$. Observe that the reward received after executing action $a_{H-1}$ in $s_{H-1}$ equals $\frac{b_{H-1}}{|\mathcal{C}|}$, i.e., $q^{\pi}_{H-1}(s_{H-1}, a_{H-1}) = \frac{b_{H-1}}{|\mathcal{C}|}$. After this action, all variables in $\varphi$ become determined, implying that $|\gU_{H-1}| = 0$. This implies:
       \begin{align}
            & q^{\pi}_{H-1}(s_{H-1}, a_{H-1}) \nonumber \\
         =\ & \langle \phi(s_{H-1}, a_{H-1}), \theta_{H-1} \rangle \nonumber \\
         =\ & \frac{1}{|\mathcal{C}|} \left( \langle Y_{H-1}, M_{H-1} \rangle +  b_{H-1} \right) \nonumber \\
         =\ & \frac{1}{|\mathcal{C}|} \left( 0 + b_{H-1} \right) \nonumber \\
         =\ & \frac{b_{H-1}}{|\mathcal{C}|}. 
    \end{align}
    Thus, there exists $ \theta_{H-1} = [M_{H-1}, 1]^\intercal$ that the realizability holds. 
    
    \textit{Base case 2}: In the first base case, the linear realizability of state $s_{H-1}$ depends solely on the scalar quantity $b_{H-1}$, and we have not yet established how the inner product $\langle Y_{h}, M_{h} \rangle$ contributes to the linear realizability condition. To proceed, we now examine realizability for the state $s_{H-2}$. In this state, upon taking action $a_{H-2}$, the system transitions to state $s_{H-1}$, where the agent subsequently follows the greedy policy determined by the look-ahead state $\hat{s}_{H-1}$.

    Note that after taking action $a_{H-2}$, the number of satisfied clauses up to stage $H-2$ is denoted by $b_{H-2}$. The set $\gU_{H-2}$ contains the undecided clauses, namely those that do not yet have all their variables determined by assignments $x_1$ through $x_{H-2}$. Moreover, the greedy action selected in state $s_{H-1}$ depends on the elements of $\Theta$. Specifically, by substituting the value of the undecided variable $x_{H-1}$ into $\gU_{H-2}$, we can determine all remaining clauses in the \DTMaxSAT\ instance. This substitution is performed according to the greedy action taken in $s_{H-1}$, meaning that the value of $x_{H-1}$ is determined by the greedy policy and assigned as $f_{H-1}(\theta')$ for each vector $\theta'$ in the matrix $\Theta$. Based on this construction, $\langle Y_{H-2}, M_{H-2} \rangle$ represents the number of clauses that are satisfied from stage $H-1$ onward when following the greedy policy $\pi$. Thus, we have:
    \begin{align}
        q^{\pi}_{H-2}(s_{H-2}, a_{H-2}) 
        &= \langle \phi(s_{H-2}, a_{H-2}), \theta_{H-2} \rangle \notag \\
        &= \frac{1}{|\mathcal{C}|} \left( \langle Y_{H-2}, M_{H-2} \rangle +  b_{H-2} \right) \notag \\
        &= \frac{|\gC_{\text{true}}(s_{H})|}{|\mathcal{C}|}. 
    \end{align}

\textit{Induction step}: Given the number of satisfied clauses up to stage $h$ (i.e., $b_h$) and the weight matrix $\Theta$, suppose that the linear realizability condition holds for all state-action pairs $(s_i, a_i) \in \gS_i \times \gA$ at every stage $i \ge h$ under any greedy policy $\pi \in \Pi^g$, that is,  
$q^{\pi}_{i}(s_i, a_i) = \langle \phi(s_i, a_i), \theta_i \rangle$.  
Our goal is to establish that this linear realizability condition also holds at stage $h - 1$ for all state-action pairs and for any greedy policy $\pi \in \Pi^g$. In other words, we aim to prove that:
    \begin{align*}
        q^{\pi}_{h-1}(s_{h-1}, a_{h-1}) = \langle \phi(s_{h-1}, a_{h-1}), \theta_{h-1} \rangle \text{\quad for any $(s_{h-1},a_{h-1}) \in \gS_{h-1} \times \gA$}.
    \end{align*}
Note that $q^{\pi_{h-1}}_{h-1}(s_{h-1}, a_{h-1}) = \frac{|\mathcal{C}_{\text{true}}(s_{H})|}{|\mathcal{C}|}$ and   $q^{\pi_{h}}_{h}(s_{h}, a_{h}) = \frac{|\hat{\mathcal{C}_{\text{true}}}(s_{H})|}{|\mathcal{C}|}$. Let $z_2 = b_{h} -b_{h-1}$ denote the difference between the numbers of satisfied clauses until stages $h-1$ and $h$ in $\varphi$. We prove the induction step as follows under policy $\pi \in \Pi^g$:
\begin{align}
    q^{\pi}_{h-1} (s_{h-1}, a_{h-1}) &= R(s_{h-1}, a_{h-1}) + v^{\pi}_{h}(s_{h}) \nonumber \\ 
    &= q^{\pi}_{h} (s_{h}, \pi(s)) =  \langle \phi(s_{h}, \pi(s_h)), \theta_{h} \rangle \nonumber \\
    &= \frac{1}{|\mathcal{C}|} \cdot (\langle Y_h, M_h \rangle + b_h) \nonumber \\
    &= \frac{1}{|\mathcal{C}|} \cdot \big( (\langle Y_h, M_h \rangle + z_2) + (b_h - z_2) \big) \nonumber\\
    &= \frac{1}{|\mathcal{C}|} \cdot \big( (\langle Y_h, M_h \rangle + z_2) + b_{h-1} \big).
\end{align}
To complete the proof, it suffices to show that $\langle Y_h, M_h \rangle + z_2 = \langle Y_{h-1}, M_{h-1} \rangle$. This is equivalent to demonstrating that $\langle Y_{h-1}, M_{h-1} \rangle - \langle Y_h, M_h \rangle = z_2 = b_h - b_{h-1}$. Since we follow the greedy policy $\pi$ from state $s_h$, the resulting terminal state $s_H$ will be the same whether we begin from $s_{h-1}$ or $s_h$ (after taking action $a_{h-1}$ and subsequently following $\pi$), as both lie along the same trajectory induced by $\pi$. Recall that rewards are only received at the terminal state. Therefore, the cumulative reward from $s_{h-1}$ to $s_H$ must equal that from $s_h$ to $s_H$. This implies  
$\langle Y_{h-1}, M_{h-1} \rangle + b_{h-1} = b_h + \langle Y_h, M_h \rangle$. Rearranging terms gives the desired equality, thereby completing the inductive step.

Based on the construction of $ \phi $ and $ \theta_h $, and the fact that the linear realizability condition is satisfied, we complete the proof of Proposition \ref{propos : realizability}.

\section{An In-depth Look at Step-2: Algorithmic Connection between $\gA_{\textsc{sat}}$ and $\gA_{\textsc{rl}}$ under $\Pi^g$}\label{Reduction-Ph2: greedy-Remain}
In this section, we detail the second step of the reduction, which establishes the connection between the algorithms $\gA_{\textsc{rl}}$ and $\gA_{\textsc{sat}}$ in the reduction from \DTMaxSAT to the \TGLRL problem. Let us first note the following definition regarding the satisfiability ratio:
\begin{definition}[$\zeta$-Satisfiability] \label{Def : delta-sat}
    A \MaxSAT instance is $\zeta$-satisfiable if there is an assignment $x \in \gX_{\text{assign}}$ such that $\frac{|\gC_{\text{true}}(x)|}{|\gC|} \ge \zeta$.
\end{definition}

We focus on a specific class of \DTMaxSAT instances in which there exists an assignment $x \in \mathcal{X}_{\text{assign}}$ that satisfies at least a $(1 - \delta + 2\epsilon)$-fraction of all clauses, where $\epsilon \leq \frac{\delta}{2}$ and $0 < \epsilon, \delta < 1$, where in our setting $\delta = \frac{1}{10}$. This condition aligns with Definition \ref{Def : delta-sat}, and throughout this paper, we set $\zeta = 1 - \delta + 2\epsilon$ for our \DTMaxSAT instances. In this problem class, it is guaranteed that a minimum $(1 - \delta)$-fraction of clauses can be satisfied, ensuring a valid instance for the reduction procedure.  Given a \DTMaxSAT instance that is $(1-\delta + 2\epsilon)$-satisfiable, we use this decision version of the \MaxSAT problem to prove the NP-hardness of the \TGLRL problem.

To complete the proof of Theorem \ref{main-thm}, the key idea of our remaining proof is as follows: Given a $(1-\delta + 2 \epsilon)$-satisfiable $\delta$-\MaxSAT instance $\varphi$, which has a maximum satisfiability ratio of $\frac{|\gC^*|}{|\gC|}$, we transform it to an instance of \TGLRL problem denoted as $M_{\varphi}$ in polynomial time such that if a random algorithm $\mathcal{A}_{\textsc{rl}}$ runs on $M_{\varphi}$, returns $\epsilon$-optimal policy $\pi$ where $\epsilon \le \frac{\delta}{2}$, then following the policy $\pi$ for setting the value of variables in our $\varphi$, we can satisfy at least $1-\delta$ fraction of clauses in the given instance $\varphi$.

Consider that $\mathcal{A}_{\textsc{rl}}$ returns an $\epsilon$-optimal policy $\pi$ in a designed MDP instance $M_{\varphi}$. It is easy to verify that for any state $s_h \in \mathcal{S}_h$ and for any $h \in [H]$, $v^{\pi}(s_h) = R(s_{H-1},\pi(s_{H-1}))= \frac{|\gC_{\text{true}}(s_{H})|}{|\gC|}$. Here, $|\gC_{\text{true}}(s_{H})|$ denotes the number of clauses satisfied when setting variables in $\delta$-\MaxSAT problem $\varphi$ following policy $\pi$. Furthermore, we can also denote the value function of the optimal policy $\pi^*$ for any given initial state $s_1$ as follows:
    \begin{equation}
        v^{*}(s_1) = \frac{|\gC^*|}{|\gC|}.
    \end{equation}    
    Since we know that $\pi^* \in \Pi^g$ based on our constructed $ m_{\varphi} $, we have:
    \begin{equation}
        \max_{\hat{\pi} \in \Pi^{g}} v^{\hat{\pi}}(s_1) = \frac{|\gC^*|}{|\gC|} \geq 1-\delta + 2 \epsilon,
    \end{equation}
    where the inequality is due to the fact that the $\delta$-\MaxSAT instance $\varphi$ is $(1-\delta + 2\epsilon)$-satisfiable for some $\epsilon \le \frac{\delta}{2}$. Based on the definition of $\epsilon$-optimality of policy $\pi$ for a given initial state $s_1$, we have
    \begin{equation} \label{eps-opt}
        |\max_{\hat{\pi} \in \Pi^{g}} v^{\hat{\pi}}(s_1) - v^{\pi}(s_1)| \le \epsilon.
    \end{equation}
    Thus, we have
    \begin{equation} \label{eps-opt1}
        |\max_{\hat{\pi} \in \Pi^{g}} v^{\hat{\pi}}(s_1)-v^{\pi}(s_1)| = \left|\frac{|\gC^*|}{|\gC|} - v^{\pi}(s_1) \right| \le \epsilon.
    \end{equation}
   Note that a $(1-\delta+2\epsilon)$-satisfiable instance $\varphi$ is also $(1-\delta)$-satisfiable. If we want to obtain an output of ``Yes" for \DTMaxSAT, then we need to have $\frac{|\gC_{\text{true}}(s_{H})|}{|\gC|}  \ge 1 - \delta$. This is indeed the case because 
    \begin{equation}
        \frac{|\gC_{\text{true}}(s_{H})|}{|\gC|} \ge \frac{|\gC^*|}{|\gC|} - \epsilon \ge (1-\delta + 2 \epsilon) - \epsilon \ge 1-\delta,
    \end{equation}
    where the first inequality comes from the definition of $ \epsilon $-optimality and the second inequality is due to the fact that $\varphi$ is $(1-\delta+2\epsilon)$-satisfiable. As a result, our reduction is completed.

\section{Proof of Theorem \ref{thm:softmax}} \label{Appendix:Softmax}

We begin by presenting an overview of the proof, outlining the two principal components of the reduction. We then formalize each step in detail and conclude by establishing the correctness of Theorem~\ref{thm:softmax}.

\subsection{An Overview of Our Proof}\label{Ch4:Proof-Overview}
Building on the framework presented in Section~\ref{Ch3-Sec3:Proof-overview}, we adapt the core proof methodology with minor modifications, which we make explicit in this section. To facilitate our reduction, we introduce a new complexity problem, termed \DBTMaxSAT, defined as follows:
\begin{definition}[\DBTMaxSAT] \label{Def:del-b-sat}
    Given a \MaxSAT problem $\varphi$ with $n$ variables, denoted by the set $\mathcal{X} = \{x_1, \ldots, x_n\}$, where each variable $x_i \in \mathcal{X}$ appears in at most $b$ clauses, \DBTMaxSAT is defined as the following decision problem: output ``Yes’’ if there exists an assignment $x \in \mathcal{X}_{\text{assign}}$ such that $\frac{|\mathcal{C}_{\text{true}}(x)|}{|\mathcal{C}|} \geq 1 - \delta$, i.e., at least a $1 - \delta$ fraction of the clauses are satisfied; output ``No’’ otherwise.
\end{definition}

Note that the hardness result of \MaxSAT mentioned in Lemma \ref{lem:hardness_MAT_3SAT} still holds in the case that each variable appears in at most $b\ge3$ clauses. Thus, by considering $\delta=0.1$ and $b=3$, the \DBTMaxSAT problem remains NP-hard under the NP $\neq$ P hypothesis. More specifically, we have the following lemma showing the hardness of \DBTMaxSAT under rETH.

\begin{lemma}[Hardness of \DBTMaxSAT under rETH]\label{remark:delta-b-sat-hardness} Under rETH, there exists constant $b,\delta>0$ such that no randomized algorithm can solve \DBTMaxSAT problem with $n$ variables  in time $T= \exp(o(\frac{n}{\mathrm{polylog}(n)}))$  with error probability $\frac{1}{8}$.\footnote{Note that here, we use a looser bound error probability bound in comparison with the bound in rETH (Definition \ref{Def:rETH}). This error probability match the error probability of $\gA_{\textsc{SAT}}$ in our reduction in Section \ref{Ch4-Sec5: solvers}.}
\end{lemma}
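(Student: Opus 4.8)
The plan is to prove Lemma~\ref{remark:delta-b-sat-hardness} by a gap-preserving, \emph{size-efficient} reduction from $3$-SAT that starts from rETH (Definition~\ref{Def:rETH}) and chains three classical transformations, arranged so that the final \DBTMaxSAT instance carries only a quasi-linear number of variables. The three stages are sparsification, quasi-linear gap amplification, and bounded-occurrence degree reduction; once these are in place, a routine composition argument transfers the exponential lower bound. The key design constraint throughout is that no stage may inflate the number of variables by more than a $\mathrm{polylog}$ factor, since a polynomial blow-up would only yield a much weaker $2^{\Omega(n^{\alpha})}$-type bound.

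First I would invoke the Sparsification Lemma of Impagliazzo--Paturi--Zane to pass from general $3$-SAT to $3$-SAT with a linear number of clauses ($m = O(v)$) without weakening the rETH lower bound: the lemma writes a $v$-variable formula as an OR of at most $2^{\varepsilon v}$ subinstances, each with $O_{\varepsilon}(v)$ clauses, so any $2^{o(v)}$ randomized algorithm for the sparse version would solve the dense version in $2^{o(v)}$ time as well. Next I would apply a \emph{quasi-linear-size} gap-amplifying PCP (Dinur-style, or a quasilinear PCP of proximity) to the resulting sparse $3$-CNF. This maps a size-$N = O(v)$ instance to a \GAPSAT instance of size $N \cdot \mathrm{polylog}(N)$ with a fixed constant soundness gap $\delta_0 > 0$: satisfiable formulas remain fully satisfiable, while unsatisfiable formulas admit at most a $(1-\delta_0)$-fraction of simultaneously satisfiable clauses. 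The qualitative inapproximability here is exactly the content of Lemma~\ref{lem:hardness_MAT_3SAT}; the only new requirement is that the size blow-up be $\mathrm{polylog}$ rather than polynomial, so the variable count is $n_1 = v \cdot \mathrm{polylog}(v)$.

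Finally I would enforce the bounded-occurrence condition of Definition~\ref{Def:del-b-sat} via an expander-replacement (Papadimitriou--Yannakakis) gadget: each variable occurring $k$ times is split into $k$ fresh copies, and equality constraints are imposed along the edges of a constant-degree expander on those copies. The spectral gap of the expander guarantees that any assignment violating many equality edges is far from consistent, so this step preserves a constant gap $\delta$ (possibly smaller than $\delta_0$, and we fix $\delta, b$ accordingly) while increasing the variable count by only a constant factor and capping every occurrence at a constant $b$. Composing the three stages yields a reduction from $v$-variable $3$-SAT to \DBTMaxSAT with $n = v \cdot \mathrm{polylog}(v)$ variables, fixed constants $b,\delta > 0$, and the guarantee that a ``Yes''/``No'' answer to the \DBTMaxSAT instance decides satisfiability of the original formula. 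For the transfer, suppose toward a contradiction that a randomized algorithm solves \DBTMaxSAT in time $\exp\!\big(o(n/\mathrm{polylog}(n))\big)$ with error at most $\tfrac{1}{8}$; taking the $\mathrm{polylog}(n)$ factor to be exactly the PCP size blow-up gives $n/\mathrm{polylog}(n) = \Theta(v)$, so composing with the reduction decides $3$-SAT on $v$ variables in time $2^{o(v)}$ with error at most $\tfrac{1}{8} < \tfrac{1}{3}$, contradicting rETH.

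The \textbf{main obstacle} is securing the quasi-linear ($\mathrm{polylog}$) size blow-up simultaneously with a constant gap and bounded occurrences: each property is individually classical, but obtaining all three at once forces the use of the delicate quasilinear-length PCP constructions rather than the textbook polynomial-size PCP, and requires checking that the expander-replacement step is genuinely linear-size and gap-preserving when composed after amplification. A secondary subtlety is randomness bookkeeping---the sparsification and PCP steps may themselves be randomized, so I would amplify each component's success probability and apply a union bound to keep the total error of the composed $3$-SAT solver within the budget needed to contradict rETH, which is why the statement is phrased with the looser error probability $\tfrac{1}{8}$ noted in the footnote.
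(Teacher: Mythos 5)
Your proposal is correct in substance, but it takes a different (and much heavier) route than the paper. The paper's proof simply black-boxes the result you are reconstructing: it cites the lemma of Kane et al.\ (2023) that \GAPSAT (a bounded-occurrence gap version of 3-SAT) is already rETH-hard to solve in time $\exp(cv/\mathrm{polylog}(v))$ with error probability $1/8$, and then gives a short reduction from \GAPSAT to \DBTMaxSAT in which the instance is literally copied and only the decision thresholds need to be related. Your chain of sparsification, quasilinear-size Dinur-style gap amplification, and Papadimitriou--Yannakakis expander replacement is precisely the machinery underlying that cited lemma, so in effect you re-derive it from first principles before performing the same final transfer. What your approach buys is self-containedness and an explicit account of where the $\mathrm{polylog}$ loss and the constants $b,\delta$ come from; what it costs is reliance on delicate constructions (quasilinear PCPs, gap-preserving degree reduction) each of which would need careful verification, whereas the paper's two-step argument is shorter and delegates exactly this burden to the literature. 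One point in your favor on correctness of detail: the threshold comparison must ensure that an assignment satisfying a $(1-\delta)$-fraction certifies satisfiability of the gap instance, which requires $\delta$ strictly smaller than the soundness gap $\epsilon$; you handle this correctly by shrinking $\delta$ after expander replacement, while the paper's write-up states the inequality as $\delta > \epsilon$, which appears to be reversed. Your secondary worry about randomness in sparsification and the PCP step is unnecessary (both constructions are deterministic), though the error amplification and union bound over the $2^{\varepsilon v}$ sparse branches is indeed needed and you include it.
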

Lemma~\ref{remark:delta-b-sat-hardness} establishes that \DBTMaxSAT is at least as hard as 3-SAT under rETH. The proof largely follows the arguments presented in \cite{inapprox, Arora_Barak_2009}; however, for completeness, we restate it in Appendix~\ref{Appendix:delta-b-hardness}.

Similar to the hardness proof of \GLRL, the key to our proof relies on a reduction process hinges on two critical components:

\textbf{Step-1: Polynomial Construction of $M_{\varphi}$}. We show that it is possible to design an MDP instance \(M_{\varphi}\) (with the given attributes \TSLRL) from a given \DBTMaxSAT instance \(\varphi\) in polynomial time. This requires the polynomial-time construction of the partial realizability features and weight vectors, along with the polynomial-time construction of parametric policies. These steps form the first part of the proof, which is outlined in Section \ref{Ch4-Sec4: Design}.  

\textbf{Step-2: Algorithmic Connection between $\gA_{\textsc{sat}}$ and $\gA_{\textsc{rl}}$}. Recall that the learner interacting with \TSLRL using algorithm $\mathcal{A}_{\textsc{rl}}$, and the algorithm solving \DBTMaxSAT is denoted as $\mathcal{A}_{\textsc{sat}}$. We show that if $\gA_{\textsc{rl}}$ succeeds on $M_{\varphi}$ with low error probability, then $\gA_{\textsc{sat}}$ can be algorithmically derived from $\gA_{\textsc{rl}}$ to solve the \DBTMaxSAT instance $\varphi$ with comparable efficiency and low error probability. Specifically, we show that if the \DBTMaxSAT instance $\varphi$ is $(1-\delta)$-satisfiable for some $\epsilon \le v^*- 3.06\cdot O\left(\frac{1}{\sqrt{H}}\right) -0.9 $, and algorithm $\mathcal{A}_{\textsc{rl}}$ returns an $\epsilon$-optimal policy $\pi$ in time $T$ for solving the MDP instance $M_{\varphi}$ with error probability $\frac{1}{10}$ , then following the policy $\pi$ for setting the value of variables in our $\varphi$, we can satisfy at least $1-\delta$ fraction of clauses in the $\delta$-\MaxSAT instance $\varphi$ in time $T$ with error probability $\frac{1}{8}$. Finally, we connect our reduction process to rETH by given Lemma \ref{remark:delta-b-sat-hardness}, which leads to our Theorem \ref{thm:softmax}.

Both components are essential for demonstrating the hardness of \TSLRL. Compared to the proof of Theorem~\ref{main-thm}, the key modification here in this section, which distinguishes Theorem~\ref{main-thm} from Theorem~\ref{thm:softmax}, arises from the inclusion of \textit{stochastic policies}. This stochastic framework allows us to refine our results through the lens of randomized reduction, and further investigating our hardness result under rETH, enabling a more precise characterization of computational hardness.

\subsection{An In-depth Look at Step-1: Polynomial Construction of $M_{\varphi}$
} \label{Ch4-Sec4: Design}
Following the framework established in Section~\ref{Ch3-Sec3:Proof-overview}, we construct a polynomial-time reduction from a \DBTMaxSAT instance $\varphi$ (as defined in Definition~\ref{Def:del-b-sat}) to an MDP instance that captures the structure of the \TSLRL problem. The instance $\varphi$ consists of $n$ Boolean variables ordered as $(x_1, \dots, x_n)$, with each variable appearing in at most $b$ clauses. Since the core MDP components—states, actions, transitions, and rewards—remain identical to those in the \TGLRL framework, we focus exclusively on the modifications made to the approximation-related components. In particular, we describe the polynomial-time construction of the PSP vector and the partial realizability vectors, both of which are specifically adapted to the \TSLRL problem.

\textbf{Design of Partial Realizability Vectors and Realizability Validation.} 
We retain the structure of $\phi$ as described in Section~\ref{Ch3-Sec4-2: Design-partial realizability-Vectors}, while modifying the structure of $\theta$ to accommodate the softmax policy set. In particular, adjustments are necessary for the virtual look-ahead state $\hat{s}_h$, defined in \Eqref{Eq:Virtual-Lookahead-State}. Recall that in $\hat{s}_h$, the variables $x_1$ through $x_{h-1}$ are determined by the actions taken in the preceding stages, whereas the remaining variables, $\hat{x}_h$ through $\hat{x}_n$, must be selected according to the policy $\pi$.

In Section~\ref{Ch3-Sec4-2: Design-partial realizability-Vectors}, the policy $\pi$ was assumed to belong to the set of deterministic greedy policies $\Pi^g$. For any $\pi \in \Pi^g$, the nonzero entries in the vector $M_h$—a key component of $\theta_h^{\intercal}$—are determined by tracing the variable assignments along the look-ahead path using the functions $\{f_j(\theta')\}_{j = h+1}^{n}$, which characterize the behavior of the greedy policy in undecided states. 

To generalize this construction to the softmax policy setting, we require a function that similarly governs action selection in undecided states, consistent with the probabilistic nature of softmax-based decisions. Crucially, we show that by suitably modifying the parameters $\theta$, the principle of value function decomposition can still be preserved under this new framework. This leads to the following proposition, which formalizes the linear realizability of value functions in the context of softmax policies:
\begin{proposition}
\label{propos : realizability-sm}
    Given a softmax policy set $\Pi^{sm}$, for any state $s_h \in \mathcal{S}_h$, $a_h \in \mathcal{A}$, and $\pi \in \Pi^{sm} $, there exist $\phi(s_h,a) \in \mathbb{R}^{d}$ and $\theta_{h} \in \mathbb{R}^{d}$  such that for any stage $h \in [H]$: 
    \begin{align} \label{lin-target}
        q^{\pi}_{h}(s_{h},a_{h}) = \langle \phi(s_{h},a_{h}), \theta_{h} \rangle.
    \end{align}
\end{proposition}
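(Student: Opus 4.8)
The plan is to reuse the partial realizability feature construction $\phi(s_h,a) = \frac{1}{|\mathcal{C}|}\big[b_h, Y_h\big]$ from Section~\ref{Ch3-Sec4-2: Design-partial realizability-Vectors} \emph{verbatim}, and to modify only the weight vector $\theta_h = [1, M_h]^{\intercal}$ by replacing the binary look-ahead vector $M_h$ with a vector of \emph{clause-satisfaction probabilities} induced by the softmax policy. The single structural fact that makes this possible is that the PSP feature $\phi'(s_h,a)$ in \Eqref{phi_eq} depends only on the stage index $h$ and the action, so for every $\pi_{\theta'}\in\Pi^{sm}$ the action probability
\[
p_h := \pi_{\theta'}(\text{True}\mid s_h) = \frac{e^{\theta'_h}}{e^{\theta'_h}+e^{-\theta'_h}}
\]
is \emph{identical across all states} $s_h\in\mathcal{S}_h$. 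Consequently, once we condition on reaching stage $h$, the look-ahead assignments of the unassigned variables $x_{h+1},\dots,x_n$ form an independent sequence of Bernoulli draws with stage-dependent (but state-independent) parameters $p_{h+1},\dots,p_n$. This is precisely the probabilistic generalization of the deterministic threshold $f_j(\theta')$ used in \Eqref{Eq:Virtual-Lookahead-State}, and it is what allows a single vector $M_h$ to serve every state in $\mathcal{S}_h$.

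Concretely, I would define each entry $m^{(j)}_{h,i}$ of $M_h$ to be the probability that the valid clause $c^{(j)}_i$, viewed as a clause over the unassigned variables as it survives into $\mathcal{U}_h$, is satisfied by the random look-ahead assignment; that is, $m^{(j)}_{h,i} = 1 - \prod_{\ell}\Pr[\ell\text{ false}]$, where the product ranges over the literals $\ell$ of $c^{(j)}_i$ and $\Pr[x_k\text{ false}] = 1-p_k$, $\Pr[\bar{x}_k\text{ false}] = p_k$. As in the greedy case, entries for clauses already containing an assigned variable are set to $0$. Under this redefinition, $\langle Y_h, M_h\rangle$ is exactly the \emph{expected} number of currently undecided clauses that will eventually be satisfied under $\pi$, so the target becomes $q^{\pi}_h(s_h,a_h) = \frac{1}{|\mathcal{C}|}\big(b_h + \langle Y_h, M_h\rangle\big)$.

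The proof of this identity then proceeds by backward induction on $h$, mirroring the three-part argument of Proposition~\ref{propos : realizability} but with the deterministic Bellman step replaced by its stochastic analogue. The base case at $h=H-1$ is unchanged: after the final action all variables are fixed, $|\mathcal{U}_{H-1}|=0$, and $q^{\pi}_{H-1} = b_{H-1}/|\mathcal{C}|$. For the inductive step I would expand
\[
q^{\pi}_{h-1}(s_{h-1},a_{h-1}) = v^{\pi}_h(s_h) = \sum_{a\in\mathcal{A}} \pi(a\mid s_h)\, q^{\pi}_h(s_h,a),
\]
substitute the inductive hypothesis for each $q^{\pi}_h(s_h,a)$, and reduce the claim to the single clause-bookkeeping identity
\[
\langle Y_{h-1}, M_{h-1}\rangle = \mathbb{E}_{a\sim\pi(\cdot\mid s_h)}\big[(b_h - b_{h-1}) + \langle Y_h, M_h\rangle\big],
\]
which is a law-of-total-probability statement: the expected number of $\mathcal{U}_{h-1}$ clauses eventually satisfied equals the expected number satisfied immediately upon drawing $x_h$, plus the expected number satisfied later from the residual clauses in $\mathcal{U}_h$.

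I expect the main obstacle to be verifying this last identity at the level of individual clauses as they \emph{simplify} across stages. When $x_h$ is drawn, a clause in $\mathcal{U}_{h-1}$ either becomes satisfied (if its $x_h$-literal is true), is removed, or shrinks to a shorter clause $c'$ over $x_{h+1},\dots,x_n$ (if its $x_h$-literal is false); the crux is to show that the per-clause probability decomposes as $m^{(j)}_{h-1} = q^{\mathrm{lit}}_h + (1-q^{\mathrm{lit}}_h)\, m^{(j-1)}_{h}(c')$, where $q^{\mathrm{lit}}_h$ is the probability that the $x_h$-literal is satisfied, while clauses not containing $x_h$ satisfy $m^{(j)}_{h-1} = m^{(j)}_h$ trivially. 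Because the Bernoulli parameters $p_j$ are shared by all states, both $q^{\mathrm{lit}}_h$ and $m^{(j-1)}_h(c')$ are well-defined stage quantities, and the product form of $m^{(j)}_{h-1}$ factors cleanly through the $x_h$-literal to give exactly this recursion. Summing the recursion against the clause-count vector $Y$—while carefully accounting for the shift of shrunken clauses into the shorter-length blocks of $\mathcal{C}_{\mathrm{total}}$—then yields the displayed identity and closes the induction, establishing Proposition~\ref{propos : realizability-sm}.
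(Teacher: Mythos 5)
Your proposal is correct, and the weight vector you build is in fact the \emph{same object} as the paper's: the paper's proof of Proposition~\ref{propos : realizability-sm} keeps $\phi$ unchanged and sets $\theta_h = \sum_{\tau \in \mathcal{T}_h^{\pi}} P^{(\tau)}\,[1, M_h^{(\tau)}]^{\intercal}$, a probability-weighted average of the greedy-style look-ahead vectors over the $2^{H-h-1}$ roll-out trajectories; by linearity of expectation over the independent Bernoulli draws, the entry of that average corresponding to a clause $c^{(j)}_i$ is exactly your closed-form satisfaction probability $1-\prod_{\ell}\Pr[\ell\ \text{false}]$. Where you genuinely diverge is in the verification. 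The paper expands $q^{\pi}_h(s_h,a_h)=\sum_{\tau}P^{(\tau)}R^{(\tau)}(s_H)$, applies the deterministic realizability of Proposition~\ref{propos : realizability} separately to each trajectory $\tau$ (each roll-out is a fixed completion of the assignment, so $R^{(\tau)}(s_H)=\langle\phi(s_h,a_h),\theta_h^{(\tau)}\rangle$), and then pulls the weights $P^{(\tau)}$ inside the inner product by linearity; no new induction is needed. You instead define the entries of $M_h$ probabilistically from the start and re-run the backward induction through the Bellman equation, reducing the claim to a per-clause law-of-total-probability recursion $m^{(j)}_{h-1}=q^{\mathrm{lit}}_h+(1-q^{\mathrm{lit}}_h)\,m^{(j-1)}_{h}(c')$. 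Each route buys something: the paper's argument is shorter because it recycles the greedy proposition wholesale, but it presents $\theta_h$ as a sum over exponentially many trajectories and must then appeal to an $O(H)$ dynamic program per entry to keep the construction polynomial-time; your product formula (over at most three literals) gives each entry in $O(1)$ time and makes explicit the one structural fact the paper leaves implicit — that the softmax probabilities $p_j$ are state-independent because $\phi'$ depends only on the stage index and the action, which is precisely what allows a single $\theta_h$ to serve every state in $\mathcal{S}_h$. Two small points to tighten in your write-up: state the convention $m^{(0)}_h \equiv 0$ for a clause whose last unassigned variable is $x_h$ and whose literal comes out false (otherwise the "shrinks to a shorter clause" case is incomplete), and note that your bookkeeping identity follows in one line from linearity of expectation, so the full clause-by-clause induction, while correct, is heavier machinery than is actually needed.
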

\begin{proof}
As we stated at the end of Section \ref{Ch3-Sec4-2: Design-partial realizability-Vectors}, the main idea of design is to guarantee that the feature vector $ \phi(s_h, a_h) $ depends on $ s_h $ and $ a_h $, while the weight vector $ \theta_h$ depends only on the roll-out of the softmax policy beyond stage $ h $, thereby enabling the linear realizability condition $ q_h^{\pi}(s_h, a_h) = \langle \phi(s_{h}, a_{h}), \theta_{h} \rangle $. 
The approach that we use here again is to decompose the value of each state-action pair into two terms:
\begin{align*}
    q^{\pi}_{h}(s_{h},a_{h}) = \frac{1}{|\mathcal{C}|}\cdot (b_h + \langle Y_h, M_h \rangle),
\end{align*} 
where the first term, $b_h$, represents the number of satisfied clauses up to stage $h$, and the second term, $\langle Y_h, M_h \rangle$, indicates the number of satisfied clauses when following the softmax policy from stage $h+1$ to the final stage $H$. Compared to the proof performed under $\Pi^g$, the changes that must be applied in the case of a given softmax policy $\pi \in \Pi^{sm}$ only affect the term $M_h$.

We now proceed to prove linear realizability by appropriately characterizing the structure of $M_h$ under the softmax policy set $\Pi^{sm}$. The main technical challenge in transitioning to $\Pi^{sm}$ lies in constructing a function—analogous to $f_h(\theta')$ from Section~\ref{Ch3-Sec4-2: Design-partial realizability-Vectors}, embedded within $\theta^{\intercal}_h$—that accurately captures the action choices made by softmax policies in undecided states. For ease of exposition, we refer to this function as the \emph{policy follower}.

Our approach begins by analyzing the structure of the action-value function $q^{\pi}_{h}(s_h, a_h)$, which provides the foundation for defining the appropriate policy follower under stochastic policies. For any $\pi \in \Pi^{sm}$, the action at each stage is drawn according to the probability distribution $\pi(a_h \mid s_h)$, inducing stochastic trajectories in the MDP $M_{\varphi}$. Consequently, the value $q^{\pi}_{h}(s_h, a_h)$ is expressed as the expected cumulative reward over all possible continuations from $(s_h, a_h)$, that is:
\begin{align}
    q_{h}^{\pi}(s_h, a_h) = \mathbb{E}_{\pi} \left[ R(s_H) \mid s = s_h, a = a_h \right].
\end{align}
The expectation in $q_h^{\pi}(s_h, a_h)$ accounts for the stochasticity introduced by softmax action selections along the trajectory. For any policy $\pi \in \Pi^{\mathsf{sm}}$, when applied to states in stage $h$ (i.e., $s_h \in \gS_h$), the resulting action distributions induce a set of possible paths to the terminal state, each corresponding to a different sequence of stochastic decisions. Specifically, this gives rise to $2^{H - h - 1}$ distinct transition paths from $(s_h, a_h)$ to the terminal state $s_H$.

Let $\mathcal{T}_h^{\pi}(s_h, a_h)$ denote the set of all such transition paths generated by taking action $a_h$ in state $s_h$ and subsequently following policy $\pi \in \Pi^{sm}$. Each path $\tau \in \mathcal{T}_h^{\pi}(s_h, a_h)$ corresponds to a sequence of state-action pairs beginning at $(s_h, a_h)$ and terminating at $s_H$, i.e.,  
$\tau = (s_h, a_h, \ldots, s_H)$.  
When the context is clear, we will abbreviate this set as $\mathcal{T}_h^{\pi}$ for notational convenience.

For any $ \tau\in \mathcal{T}_h^{\pi} $, let $ P^{(\tau)} $ denote the probability of the path $ \tau $ sampled under the softmax policy $ \pi $. Let $ R^{(\tau)}(s_H) $ denote the reward obtained at the  terminal state following the path $ \tau $. Clearly, the above expectation over the terminal reward can be written as
\begin{align}
    q_{h}^{\pi}(s_h, a_h) = \sum_{\tau \in \mathcal{T}_h^{\pi} } P^{(\tau)} \cdot R^{(\tau)}(s_H),
\end{align}  

To ensure the linear realizability condition for $q^{\pi}(s_h, a_h)$ for any policy $\pi$, we require   
$q_{h}^{\pi}(s_h, a_h) = \langle \phi(s_h, a_h), \theta_h \rangle$. Let $\theta_{h}^{(\tau)}$ represent a vector obtained by incorporating the policy induced by the trajectory $ \tau $ into the $M_h$ component of $\theta_h$. Thus, we have $\theta_h^{(\tau)} = [1,M_h^{(\tau)}]^{\intercal} $, where $ M_h^{(\tau)} $ denotes the vector $ M_h $ when we follow the trajectory $ \tau $ for setting values of $ M_h $ elements.  Suppose that action value function is realizable: 
\begin{align}
       q_{h}^{\pi}(s_h,a_h) 
    =\ & \langle \phi(s_h,a_h),\theta_h \rangle \nonumber \\
    =\ &  \sum_{\tau \in \mathcal{T}_h^{\pi}} P^{(\tau)} \cdot \langle \phi(s_h,a_h), \theta_{h}^{(\tau)} \rangle  \nonumber \\
    =\ & \big\langle \phi(s_h,a_h) ,  \sum_{\tau \in \mathcal{T}_h^{\pi}} P^{(\tau)} \cdot  \theta_{h}^{(\tau)} \big\rangle \nonumber \\
    =\ & \big\langle \phi(s_h,a_h),  \sum_{\tau \in \mathcal{T}_h^{\pi}} P^{(\tau)} \cdot [1,M_{h}^{(\tau)}]^{\intercal} \big\rangle. \label{Eq:SM-realizability}
\end{align}
Based on \Eqref{Eq:SM-realizability}, if we design $ \theta_h $ by
\begin{align}
 \theta_h =  \sum_{\tau \in \mathcal{T}_h^{\pi}} P^{(\tau)} \cdot [1,M_{h}^{(\tau)}]^{\intercal},  \qquad \forall h \in [H],
\end{align}
then the linear realizability condition in \Eqref{lin-target} holds. Note that $ P^{(\tau)} $ is determined in $O(H)$ for any policy $\pi \in \Pi^{sm}$, stage $h \in [H]$ and trajectory $ \tau $. As a result, each non-zero element of $M_h$ (denoted by $m^{(j)}_{h,i}$ in Section \ref{Ch3-Sec4-2: Design-partial realizability-Vectors}) is a weighted sum of clauses' assignments under different possible paths $ \tau  \in \mathcal{T}_h^{\pi}(s_h,a_h,\pi)$ scaled by $P^{(\tau)}$. We thus complete the proof of Proposition \ref{propos : realizability-sm}.
\end{proof}

As previously discussed, it is essential to ensure that the reduction from the \DTMaxSAT instance to the \TSLRL problem can be carried out in polynomial time. From earlier constructions of the partial realizability and PSP feature vectors, we know that both can be computed in $O(n^3)$ time. The remaining component to analyze is the parameter vector $\theta_h \in \mathbb{R}^d$.  Each entry of $\theta_h$—which corresponds to elements in $M_h$—can be efficiently computed via a dynamic programming procedure in $O(H)$ time. Given that the dimension $d$ satisfies $d = O(n^3)$, the overall time required to construct $\theta_h$ is $O(n^3 \cdot H) = O(n^4)$, which remains polynomial in the input size $n$. This confirms that the full reduction to the \TSLRL instance is computationally efficient.

\subsection{An In-depth Look at Step-2: Algorithmic connection between $\gA_{\textsc{sat}}$ and $\gA_{\textsc{rl}}$ under $\Pi^{sm}$}\label{Ch4-Sec5: solvers}

As we mentioned in Section \ref{Ch4:Proof-Overview}, the second component of the reduction involves establishing a correct algorithmic connection. To complete the proof of Theorem~\ref{thm:softmax}, it remains to demonstrate how the $\gA_{\textsc{RL}}$ algorithm can be leveraged to construct the $\gA_{\textsc{SAT}}$ method such that solving the \TSLRL instance efficiently yields an efficient solution for the \DBTMaxSAT instance. Our key idea is as follows: Given a $(1-\delta)$-satisfiable \DBTMaxSAT instance $\varphi$, which has a maximum satisfiability ratio of $\frac{|\gC^*|}{|\gC|}$. We transform it to an instance of \TSLRL denoted as $M_{\varphi}$ (the constructed MDP instance) in polynomial time such that if a randomized algorithm $\mathcal{A}_{\textsc{rl}}$ runs on $M_{\varphi}$ returns $\epsilon$-optimal policy $\pi$ with error probability $\frac{1}{10}$ in time $T$, then following the policy $\pi$, if we use $\gA_{\textsc{rl}}$ algorithm as a module in designing randomized algorithm $\gA_{\textsc{sat}}$ for setting the value of variables in our $\varphi$, we can satisfy at least $1-\delta$ fraction of clauses in time $T$ with error probability $\frac{1}{8}$. 

Here, since $ \{\pi^*\} \subsetneq \Pi^{sm}$, we only need to establish the reduction based on the best stochastic policy\footnote{More formally, with high probability, the policy class $\Pi^{sm}$ contains stochastic policies whose value functions closely approximate the optimal value function $v^*$. This is because $\Pi^{sm}$ is constructed from the parameter matrix $\Theta \in \mathbb{R}^{d' \times \infty}$ and encompasses an infinite collection of softmax policies. As a result, the probability that $\Pi^{sm}$ includes a near-optimal policy converges to $1$.} within the set $\Pi^{sm}$.  Recall that $R(s_{H}) = \frac{|\gC_{\text{true}}(s_{H})|}{|\gC|}$ denotes the reward obtained at the terminal state $s_{H}$. Note that $R(s_{H})$ is a random variable due to the stochastic nature of the softmax policy set. We can now state the $\epsilon$-optimality condition as given in \Eqref{Eq-Eps-Opt}: 
\begin{align}  \label{Eq-Eps-Opt}
& \arg\max_{\hat{\pi} \in \Pi^{sm}}  v^{\hat{{\pi}}} - v^{\pi} = \frac{|\gC^*|}{|\gC|}- \mathbb{E}[R(s_{H})] \le \epsilon.
\end{align}
By rearranging \Eqref{Eq-Eps-Opt}, we have the lower bound on the expected value of the observed sampled rewards.
\begin{align}
\mathbb{E} [R(s_{H})] \ge \frac{|\gC^*|}{|\gC|}-\epsilon. \label{Eq-Expecation-Bound}
\end{align}
Note that given an $\epsilon$-optimal policy $\pi \in \Pi^{sm}$ in the initial state $s_1$, we want to ensure that by following the stochastic policy $\pi$ for setting the value of the \DBTMaxSAT instance, with high probability, we can satisfy at least a $1 - \delta$ fraction of the clauses. Therefore, we want to ensure that $R(s_{H}) \ge 1 - \delta$ holds with high probability. To achieve this, we utilize the McDiarmid inequality, which establishes a probabilistic connection between the random variable and its expectation under specific conditions. 

\begin{lemma}[McDiarmid’s inequality]\label{Rmk:MCDiarmid}
 Let $ X_1, \dots, X_n $ be independent random variables, where each $ X_i $ takes values in a set $ \mathcal{X}_i $.  
Let $ \mathcal{F}: \mathcal{X}_1 \times \dots \times \mathcal{X}_n \to \mathbb{R} $ be a function satisfying the \textbf{bounded differences property}:  
for every $ i = 1, \dots, n $, if two inputs $ (x_1, \dots, x_n) $ and $ (x'_1, \dots, x'_n) $ differ only in the $ i $-th coordinate (i.e., $ x_j = x'_j $ for all $ j \neq i $), then  
\[
|\mathcal{F}(x_1, \dots, x_n) - \mathcal{F}(x'_1, \dots, x'_n)| \leq r_i,
\]  
where $ r_i $ is a constant for each $i$. For any $ t > 0 $, the deviation of $ f $ from its expectation satisfies:  
\begin{align}\label{Eq:MCD-Bound}
\Pr\left(\gF(X_1, \dots, X_n) - \mathbb{E}[\gF(X_1, \dots, X_n)] \geq t\right) \leq \exp\left( \frac{-2t^2}{\sum_{i=1}^{n} r_i^2} \right),
&\\ \Pr\left(\gF(X_1, \dots, X_n) - \mathbb{E}[\gF(X_1, \dots, X_n)] \leq -t\right) \leq \exp\left( \frac{-2t^2}{\sum_{i=1}^{n} r_i^2} \right). \label{Eq:MCD-Bound2}
\end{align}
\end{lemma}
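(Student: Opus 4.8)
The plan is to prove the upper-tail estimate and then deduce the lower-tail estimate by applying the same argument to $-\gF$, which satisfies the bounded-differences hypothesis with the identical constants $r_i$. The standard route is the \emph{Doob martingale} together with the Azuma--Hoeffding method. First I would define, for $i = 0, 1, \dots, n$,
\[
Z_i := \mathbb{E}\!\left[\gF(X_1, \dots, X_n) \mid X_1, \dots, X_i\right],
\]
so that $Z_0 = \mathbb{E}[\gF(X_1,\dots,X_n)]$ and $Z_n = \gF(X_1, \dots, X_n)$. By the tower property, $(Z_i)_{i=0}^n$ is a martingale, and setting $D_i := Z_i - Z_{i-1}$ yields the telescoping decomposition $\gF - \mathbb{E}[\gF] = \sum_{i=1}^n D_i$ into martingale differences, each with conditional mean zero given $X_1, \dots, X_{i-1}$.

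The crux of the argument is to show that, conditioned on $X_1, \dots, X_{i-1}$, the difference $D_i$ is supported on an interval of width at most $r_i$. To this end I would introduce the envelopes
\[
U_i := \sup_{x}\, \mathbb{E}\!\left[\gF \mid X_1, \dots, X_{i-1}, X_i = x\right], \qquad L_i := \inf_{x}\, \mathbb{E}\!\left[\gF \mid X_1, \dots, X_{i-1}, X_i = x\right],
\]
both of which are functions of $X_1, \dots, X_{i-1}$ alone. Independence is essential here: since $X_{i+1}, \dots, X_n$ are independent of $X_i$, replacing $X_i$ by an arbitrary value $x$ inside the conditional expectation alters only the $i$-th coordinate passed to $\gF$, so the bounded-differences hypothesis forces $U_i - L_i \le r_i$. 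As $Z_i \in [L_i, U_i]$ while $Z_{i-1}$ is constant once $X_1, \dots, X_{i-1}$ are fixed, the difference $D_i = Z_i - Z_{i-1}$ lies in an interval of width at most $r_i$.

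With the conditional range bound in hand, the rest is routine. I would invoke Hoeffding's lemma for the conditional moment generating function: a zero-mean random variable confined to an interval of width $r_i$ satisfies, for every $\lambda > 0$,
\[
\mathbb{E}\!\left[e^{\lambda D_i} \mid X_1, \dots, X_{i-1}\right] \le \exp\!\left(\frac{\lambda^2 r_i^2}{8}\right).
\]
Peeling the differences off one at a time via the tower property gives $\mathbb{E}[e^{\lambda(\gF - \mathbb{E}[\gF])}] \le \exp\!\big(\tfrac{\lambda^2}{8}\sum_{i=1}^n r_i^2\big)$. An exponential Markov (Chernoff) bound then yields
\[
\Pr\!\left(\gF - \mathbb{E}[\gF] \ge t\right) \le \exp\!\left(-\lambda t + \frac{\lambda^2}{8}\sum_{i=1}^n r_i^2\right),
\]
and optimizing over $\lambda$ by taking $\lambda = 4t / \sum_{i=1}^n r_i^2$ produces the advertised bound $\exp\big(-2t^2 / \sum_{i=1}^n r_i^2\big)$. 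Replacing $\gF$ with $-\gF$ gives the matching lower-tail bound, completing the proof.

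The main obstacle is the envelope step: one must justify carefully that the conditional martingale difference inherits the width-$r_i$ range directly from the bounded-differences property, and this is precisely where the independence of the $X_i$ is indispensable---without it, fixing $X_i = x$ could also perturb the conditional law of $X_{i+1}, \dots, X_n$ and break the bound. Everything downstream, namely Hoeffding's lemma and the Chernoff optimization, is standard and introduces no further difficulty.
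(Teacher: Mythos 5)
Your proposal is correct: the Doob martingale $Z_i = \mathbb{E}[\gF \mid X_1,\dots,X_i]$, the envelope argument showing $U_i - L_i \le r_i$ (where, as you rightly stress, independence of the $X_i$ is exactly what lets a change in the $i$-th coordinate be absorbed by the bounded-differences hypothesis), Hoeffding's lemma for the conditional moment generating function, and the Chernoff optimization at $\lambda = 4t/\sum_{i=1}^n r_i^2$ together yield the stated bound $\exp\bigl(-2t^2/\sum_{i=1}^n r_i^2\bigr)$, with the lower tail following by applying the result to $-\gF$. There is nothing to compare against in the paper: Lemma~\ref{Rmk:MCDiarmid} is McDiarmid's classical inequality, which the paper states without proof and uses as an imported tool in the reduction for Theorem~\ref{thm:softmax}; your argument is precisely the standard derivation of that classical result.
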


Based on the concentration inequality stated in Lemma~\ref{Rmk:MCDiarmid}, consider a function $ \gF $ comprising $ n $ inputs (independent random variables $ X_1, X_2, \dots, X_n $). If altering any single variable $ X_i $ within its domain $ \mathcal{X}_i $ (while keeping all other variables fixed) results in a bounded change $ r_i $ in the value of $ \gF $, then $ \gF $ satisfies the \emph{bounded differences property}.
For such a function $ \gF $ satisfying these conditions, the concentration inequality in \Eqref{Eq:MCD-Bound} and \Eqref{Eq:MCD-Bound2} holds. 

Based on Lemma \ref{Rmk:MCDiarmid}, we aim to derive a concentration bound for our random variable $R(s_{H})$, which is bounded between 0 and 1. Our reward function $R(s_{H})$ also satisfies the \emph{bounded difference property} as outlined in Lemma \ref{Rmk:MCDiarmid}, since changing any action through the trajectory from $s_1$ to $s_{H}$ only induces a bounded difference in the value of $R(s_{H})$. In our \DBTMaxSAT instance, since each variable appears in at most $b$ clauses, we have $r_i = \frac{b}{|\gC|}$ for any $i \in [n]$, leading to 
\begin{align} \label{INeql:MK-general}
    \Pr\left(R(s_{H}) - \mathbb{E}[R(s_{H})] \leq -t\right) \leq \exp\left( \frac{-2t^2\cdot |\gC|^2}{H\cdot b^2} \right).
\end{align}
Let us consider $\mathfrak{p}_0 = \exp\left( \frac{-2t^2 \cdot |\gC|^2}{H \cdot b^2} \right)$, then $t = \frac{b}{|\gC|} \sqrt{\frac{H \ln\left( \frac{1}{\mathfrak{p}_0} \right)}{2}}$. We can rearrange the terms and obtain the complement in \Eqref{INeql:MK-general} as follows:
\begin{align} \label{INeql:MK-Rearrange}
        \Pr\left(R(s_{H}) \ge \mathbb{E}[R(s_{H})]  - \frac{b}{|\gC|}\sqrt{\frac{H\ln(\frac{1}{\mathfrak{p}_0})}{2}}\right) \geq 1-\mathfrak{p}_0.
\end{align}
We can now substitute the lower bound on $ \mathbb{E} [R(s_{H})] $ from \Eqref{Eq-Expecation-Bound}, leading to the following inequality:
\[
R(s_{H}) \ge \frac{|\gC^*|}{|\gC|} - \epsilon - \frac{b}{|\gC|}\sqrt{\frac{H\ln\left(\frac{1}{\mathfrak{p}_0}\right)}{2}}.
\]
Recall that we want to ensure 
\[
\Pr\left( R(s_{H}) \ge 1-\delta \right) \ge 1 - \mathfrak{p}_0
\]
for some small probability $ \mathfrak{p}_0 $. Therefore, based on \Eqref{INeql:MK-Rearrange}, we need to ensure that the following condition holds:
\[
\frac{|\gC^*|}{|\gC|} - \epsilon - \frac{b}{|\gC|}\sqrt{\frac{H\ln\left(\frac{1}{\mathfrak{p}_0}\right)}{2}} \ge 1-\delta.
\]
By rearranging the inequality, we obtain the following condition that must hold for the given $ \epsilon $:
 \begin{align}\label{ineql:gap-upper-bound}
    \epsilon \le  \frac{\gC^*}{\gC} + \delta-\frac{b}{|\gC|}\sqrt{\frac{H\ln(\frac{1}{\mathfrak{p}_0})}{2}} - 1. 
 \end{align}
 
In \Eqref{ineql:gap-upper-bound}, we have the following negative term:
\begin{align}
-\frac{b}{|\gC|}\sqrt{\frac{H\ln\left(\frac{1}{\mathfrak{p}_0}\right)}{2}}.    
\end{align}
To achieve a small error probability $\mathfrak{p}_0$, we can control this term by increasing $|\gC|$. Additionally, note that $b \ge 3$ (we choose $b=3$) and $\delta = 0.1$. Here, we fix error probability $\mathfrak{p}_0=\frac{1}{8}$. Furtheromore, note that here $n =d' $, $|\gC| = O(n) = O(H)$, and $\frac{|\gC^*|}{|\gC|} = v^*$. As a result, we have that:
 \begin{align}\label{inequal: Approximation-Bound}
     \epsilon \le v^* - 3.06\cdot O\left(\frac{1}{\sqrt{H}}\right) -0.9. 
 \end{align}
To guarantee that the right hand-side of the above inequality stays positive, we need to have $v^* -  3.06\cdot O\left(\frac{1}{\sqrt{H}}\right) -0.9 > 0$, hence the condition $H = \Omega(\frac{1}{(v^*-0.9)^2})$.  Based on the mentioned conditions, we have the following probability inequality: 
\[
\Pr\left( R(s_{H}) \ge 1-\delta \right) \ge 1 - \mathfrak{p}_0,
\]
where $ \mathfrak{p}_0 = \frac{1}{8} $ represents a small probability of failure. 

Recall that the goal is to use the randomized algorithm $\mathcal{A}_{\textsc{rl}}$ as a subroutine within $\mathcal{A}_{\textsc{sat}}$, such that, with high probability, at least a $1 - \delta$ fraction of the clauses are satisfied. Since the reduction is randomized, we must ensure that the overall success probability—which includes both the correctness of the polynomial-time transformation and the success of solving the \DBTMaxSAT instance—exceeds $\frac{2}{3}$ (see Chapter 7 of \cite{Arora_Barak_2009}). As all components of the instance transformation are deterministic, the success probability of the randomized reduction depends on two factors: the probability that $\mathcal{A}_{\textsc{rl}}$ returns an $\epsilon$-optimal policy, which is $\frac{9}{10}$, and the probability that $\mathcal{A}_{\textsc{sat}}$ succeeds on \DBTMaxSAT when employing $\mathcal{A}_{\textsc{rl}}$, which is $1 - \mathfrak{p}_0$. Combining these, the total success probability is $\frac{9}{10} \cdot (1 - \mathfrak{p}_0) > \frac{2}{3}$. This completes the reduction.

\section{Hardness Result about \DBTMaxSAT}\label{Appendix:delta-b-hardness}
To establish the hardness result for \DBTMaxSAT under rETH, as stated in Lemma \ref{remark:delta-b-sat-hardness}, we first introduce a complexity problem called \GAPSAT, as defined by \cite{kane2023exponential}, which is given as follows:

\begin{definition}[\GAPSAT \citep{kane2023exponential}]
    Given a 3-CNF formula $\varphi$ with $v$ variables and $O(v)$ clauses, the following conditions hold:
    \begin{itemize}
        \item Each variable appears in at most $b$ clauses.
        \item Either $\varphi$ is satisfiable, or any assignment leaves at least an $\epsilon$-fraction of clauses unsatisfied.
    \end{itemize}
\end{definition}
Based on the hardness result for solving \GAPSAT in \cite{kane2023exponential}, we restate the following lemma.

\begin{lemma}[Hardness of \GAPSAT \citep{kane2023exponential}] \label{lem:gap-sat-hardness}
Under rETH, there exist constants $b, \epsilon, c > 0$ such that no randomized algorithm can solve \GAPSAT with $v$ variables in time $\exp(cv/\text{polylog}(v))$ with error probability $1/8$.
\end{lemma}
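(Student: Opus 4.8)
The plan is to prove Lemma~\ref{lem:gap-sat-hardness} through a chain of gap-preserving, nearly-size-preserving reductions starting from plain 3-SAT, to which rETH (Definition~\ref{Def:rETH}) applies directly. The chain is: \emph{(i)} 3-SAT on $v$ variables, \emph{(ii)} 3-SAT with a \emph{linear} number of clauses, \emph{(iii)} a constant-gap constraint satisfaction instance, and \emph{(iv)} a bounded-occurrence gap 3-CNF formula matching the \GAPSAT promise. The governing principle is that every reduction must blow up the number of variables by at most a $\text{polylog}$ factor and run in time negligible against the $\exp(cv/\text{polylog}(v))$ budget; this is exactly what lets a hypothetical randomized sub-$\exp(cv/\text{polylog}(v))$ \GAPSAT solver be pulled back to a sub-exponential randomized 3-SAT algorithm, contradicting rETH.

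First I would reduce to 3-SAT with $O(v)$ clauses using the sparsification lemma (Impagliazzo--Paturi--Zane): for every $\epsilon'>0$ a 3-SAT formula on $v$ variables is written as an OR of at most $2^{\epsilon' v}$ sub-formulas, each with $O(v)$ clauses, computable in time $2^{\epsilon' v}\cdot\text{poly}(v)$. Hence rETH already holds in the linear-clause regime, since an algorithm for it running in $2^{o(v)}$ time would, after enumerating the sparse pieces, decide general 3-SAT in $2^{(\epsilon'+o(1))v}$ time, violating Definition~\ref{Def:rETH} for small $\epsilon'$. Pinning the clause count to $O(v)$ is what keeps every subsequent instance size linear in $v$ up to polylog factors, and in particular yields the $O(v)$ clauses demanded in the \GAPSAT definition.

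The crucial step is to create a constant gap while incurring only a polylog size blowup. For this I would invoke a \emph{nearly-linear-size PCP} (the quasi-linear constructions obtained by combining Dinur's gap amplification with the short PCPs of Ben-Sasson--Sudan): an instance of size $m=O(v)$ is mapped to a constant-gap CSP on $m\cdot\text{polylog}(m)=v\cdot\text{polylog}(v)$ variables and constraints, so that satisfiable instances stay satisfiable while unsatisfiable ones admit no assignment meeting more than a $(1-\epsilon_0)$-fraction of constraints, for a universal $\epsilon_0>0$; standard gadgets then rewrite this CSP as 3-CNF, preserving a constant gap with only a constant-factor size increase. Because the variable count grows from $v$ to $v'=v\cdot\text{polylog}(v)$, a solver running in $\exp(cv'/\text{polylog}(v'))$ time would decide the sparse 3-SAT instance in $\exp\bigl(c\,v\,(\log v)^{k}/(\log v)^{m}\bigr)$ time, where $(\log v)^{k}$ is the PCP's blowup and $(\log v)^{m}$ is the denominator named in the statement (using $\log v'=\Theta(\log v)$); choosing $m>k$ forces this to be $\exp(o(v))$, contradicting rETH. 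This is precisely the origin of the $\text{polylog}$ factor appearing in the exponent of the lemma.

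To secure the bounded-occurrence property I would finish with an expander-based degree reduction: a variable occurring in $d$ clauses is split into $d$ copies, equality (consistency) constraints are routed along the edges of a constant-degree expander on those copies and padded from $2$-clauses into $3$-clauses, and the expander's spectral gap guarantees that any assignment whose copies are not nearly unanimous violates a constant fraction of the equality constraints. This preserves a constant gap (with a possibly smaller $\epsilon>0$), makes every variable appear in $b=O(1)$ clauses, keeps the clause count $O(v')=O(v\cdot\text{polylog}(v))$, and increases the variable count by only a constant factor, delivering exactly the \GAPSAT promise. Finally I would handle the randomized bookkeeping: all three reductions are deterministic polynomial-time, so a randomized \GAPSAT solver with error $1/8$ composes into a randomized 3-SAT solver of the same asymptotic runtime; to union-bound over the up to $2^{\epsilon' v}$ sparse sub-instances I would first amplify the per-call error below $2^{-\Omega(v)}$ by $O(v)$ independent repetitions (a $\text{poly}(v)$ overhead, negligible in the exponent), keeping the overall error below $1/3$ as required by Definition~\ref{Def:rETH}. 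The main obstacle is the nearly-linear-size PCP: elementary gap amplification alone incurs polynomial blowup and would degrade the conclusion to a merely quasi-polynomial bound, so obtaining the $v\cdot\text{polylog}(v)$ blowup---and thus the $\exp(cv/\text{polylog}(v))$ hardness---relies on the short/quasi-linear PCP machinery imported from \cite{kane2023exponential} and its antecedents.
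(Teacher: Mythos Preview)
Your proof outline is correct and follows the standard route---sparsification lemma, nearly-linear-size PCP (Dinur combined with Ben-Sasson--Sudan), then expander-based degree reduction---that underlies results of this type; the arithmetic tracking the polylog blowup and the amplification needed to union-bound over the $2^{\epsilon' v}$ sparse pieces are both handled properly.

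The paper, however, does not prove this lemma at all: it is simply imported verbatim from \cite{kane2023exponential} as a cited black box (``we restate the following lemma''), and is then used to reduce \GAPSAT to \DBTMaxSAT. So there is nothing to compare against---you have supplied a full proof sketch where the paper supplies only a citation. Your sketch is essentially the argument one would find by following that citation and its antecedents, so in that sense your approach and the (external) proof are the same.
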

To complete the proof of Lemma \ref{remark:delta-b-sat-hardness}, it suffices to reduce \GAPSAT to \DBTMaxSAT. For this, we assume very small constants $\epsilon$ and $\delta$ to ensure that \GAPSAT satisfies the conditions of Lemma \ref{lem:gap-sat-hardness}, while preserving the NP-hardness of \DBTMaxSAT. Without loss of generality, we further assume that $\delta > \epsilon$. 

Given an instance $\varphi_1$ of \GAPSAT with $v$ variables and $|\gC|$ clauses, we proceed with the reduction as follows: we copy $\varphi_1$ and create an instance $\varphi_2$ for \DBTMaxSAT. Let the algorithm that interacts with the \GAPSAT instance be denoted as $\gA_{\text{gap}}$, and the algorithm that interacts with the \DBTMaxSAT instance as $\gA_{\text{max}}$. If the following conditions hold, we have a successful reduction:
\begin{itemize}
    \item If $\gA_{\text{max}}$ solves $\varphi_2$ and returns an assignment $x \in \gX_{\text{assign}}$ such that $\frac{|\gC_{\text{true}}(x)|}{|\gC|} \ge 1 - \delta$, then following $x$, we satisfy all clauses of $\varphi_1$ and return ``Yes".
    \item If $\gA_{\text{max}}$ cannot solve $\varphi_2$ and returns an assignment $x \in \gX_{\text{assign}}$ such that $\frac{|\gC_{\text{true}}(x)|}{|\gC|} < \delta$, then following $x$, we leave at least an $\epsilon$-fraction of clauses unsatisfied in $\varphi_1$ and return ``No".
\end{itemize}
For the first argument, if $\gA_{\text{max}}$ returns an assignment $x$ such that $\frac{|\gC_{\text{true}}(x)|}{|\gC|} \ge 1 - \delta$, then since $\delta > \epsilon$, it implies that all clauses in $\varphi_1$ are satisfied. For the second argument, if the returned assignment $x$ satisfies $\frac{|\gC_{\text{true}}(x)|}{|\gC|} < 1 - \delta$, then we cannot satisfy all clauses in $\varphi_2$, and at least an $\epsilon$-fraction of the clauses remain unsatisfied. The instance transformation occurs in polynomial time, specifically $O(v)$, and the appropriate algorithmic connection is established. Consequently, we conclude that \DBTMaxSAT is as hard as \GAPSAT, and the hardness result from Lemma \ref{lem:gap-sat-hardness} also holds for \DBTMaxSAT under rETH.

\end{document}